\newcolumntype{L}{>{\raggedright\arraybackslash}X} 
\DeclareMathOperator*{\argmin}{arg\,min}
\theoremstyle{plain}              
\newtheorem{theorem}{Theorem}[section]
\newtheorem{lemma}[theorem]{Lemma}
\pgfplotsset{compat=1.17}
\pgfplotsset{compat=1.18}
\title{A Scalable Global Optimization Algorithm For Constrained Clustering}
\author{%
  \textbf{Pedro Chumpitaz-Flores}\textsuperscript{1} \quad
  \textbf{My Duong}\textsuperscript{2} \quad
  \textbf{Cristobal Heredia}\textsuperscript{1} \quad
  \textbf{Kaixun Hua}\textsuperscript{1} \\
  \textsuperscript{1}Department of Industrial and Systems Engineering, University of South Florida \\
  \textsuperscript{2}Bellini College of AI, Cybersecurity, and Computing, University of South Florida \\
  \texttt{\{pedrochumpitazflores,myduong,cristobalheredia,khua\}@usf.edu}
}
\begin{document}
\maketitle

\begin{abstract}
Constrained clustering leverages limited domain knowledge to improve clustering performance and interpretability, but incorporating pairwise \emph{must‑link} and \emph{cannot‑link} constraints is an NP‑hard challenge, making global optimization intractable. Existing mixed‑integer optimization methods are confined to small‑scale datasets, limiting their utility. We propose Sample-Driven Constrained Group-Based Branch-and-Bound (SDC-GBB), a decomposable branch‑and‑bound (BB) framework that collapses must‑linked samples into centroid‑based pseudo‑samples and prunes cannot‑link through geometric rules, while preserving convergence and guaranteeing global optimality. By integrating grouped-sample Lagrangian decomposition and geometric elimination rules for efficient lower and upper bounds, the algorithm attains highly scalable pairwise k-Means constrained clustering via parallelism. Experimental results show that our approach handles datasets with 200{,}000 samples with cannot-link constraints and 1{,}500{,}000 samples with must-link constraints, which is 200 - 1500 times larger than the current state-of-the-art under comparable constraint settings, while reaching an optimality gap of $\le 3\%$. In providing deterministic global guarantees, our method also avoids the search failures that off‑the‑shelf heuristics often encounter on large datasets.
\end{abstract}

\section{Introduction}
Clustering is a core task in unsupervised learning, widely used in pattern recognition, data mining, and computer vision \citep{jain_data_2010, jain_data_1999, rao_cluster_1971}. However, purely unsupervised methods often overlook domain-specific requirements, motivating the integration of prior knowledge. Constrained clustering addresses this by incorporating guidance—typically as \textit{must-link} or \textit{cannot-link} constraints—to improve clustering alignment with real-world applications \citep{basu_constrained_2008, brieden_constrained_2017, tian_model-based_2021}. These methods have been applied to facility location, genomics, image segmentation, and text analysis \citep{yang_analyzing_2022, zhang_semi-supervised_2022, pelegrin_new_2023}.

Minimizing the within-cluster sum-of-squares, known as the Minimum Sum-of-Squares Criterion (MSSC) \citep{spath_cluster_1980}, is a key goal in clustering. To handle MSSC with constraints, many methods adapt unsupervised algorithms using penalties or assignment modifications for must-link (ML) and cannot-link (CL) constraints. For instance, constrained $k$-means \cite{wagstaff_constrained_2001} assigns points greedily to the nearest feasible center, but lacks optimality guarantees and may miss feasible solutions due to MSSC’s non-convexity and initialization sensitivity \citep{xu_power_2019, piccialli_exact_2022}. Heuristic variants address this by refining assignments (ICOP-$k$-means \citep{tan_improved_2010, rutayisire_modified_2011}), leveraging assistant centroids (MLC-$k$-means \citep{huang_semi-supervised_2008}), or delaying constraint enforcement \citep{nghiem_constrained_2020}. Soft constraint methods like PCKmeans \citep{basu_active_2004, davidson_clustering_2005} integrate penalties directly into the objective.

Although heuristic algorithms are scalable and easy to implement, they lack global optimality guarantees and may fail to find feasible solutions. Exact methods for unconstrained MSSC have been well-studied \citep{hua2021scalable, piccialli_sos-sdp_2022, aloise_np-hardness_2009}, but extending them to constrained settings is difficult due to the added combinatorial complexity of constraint satisfaction. As a result, exact approaches remain limited to small datasets. For instance, \cite{xia_global_2009} extended the global optimization framework of \cite{peng_cutting_2005} to handle constraints but scaled only to 25 points \citep{aloise_improved_2012}. Later, column-generation methods \citep{aloise_column_2012, babaki_constrained_2014} supported slightly larger instances (under 200 points) with limited constraints.
Constraint programming approaches \citep{dao_declarative_2013, dao_constrained_2015, dao_constrained_2017} offer flexibility in incorporating various constraint forms but generally do not scale beyond a few hundred points.  Mixed-Integer Programming (MIP) formulations have also been explored to handle additional cluster-level or instance-level constraints in MSSC. For example, ~\citep{tang_size_2020} proposed an iterative scheme that reformulates a size-constrained MSSC problem into a mixed-integer linear program, leveraging the unimodularity of certain constraint matrices to reduce computational complexity. Similarly, ~\citep{liberti_side-constrained_2022} examined several side-constrained MSSC models cast as Mixed-Integer Nonlinear Programs, some featuring convex relaxations that enable global optimization techniques. While these MIP-based approaches provide a powerful and flexible framework for ensuring feasibility under various constraint types, their applicability remains limited by high computational overhead, restricting them to relatively small or moderate-sized datasets. 

Branch-and-bound methods have also been specialized for constrained MSSC. \citep{guns_repetitive_2016} proposed the Constraint Programming Repetitive Branch-and-Bound Algorithm (CPRBBA), which augments Brusco’s repetitive branch-and-bound procedure \citep{brusco_repetitive_2006} with a constraint programming solver to compute tight lower and upper bounds on subsets of objects of increasing size. This approach, while effective on small instances, remains limited to fewer than 200 data points. \citep{piccialli_exact_2022} developed the PC-SOS-SDP algorithm, which integrates \textit{must-link} and \textit{cannot-link} constraints into a semidefinite programming framework, scaling to a few thousand data points but not beyond \citep{baumann_algorithm_2024}. These exact methods do not generally account for soft constraints and remain computationally expensive for larger datasets. Nonetheless, continuing progress in algorithmic design and hardware \citep{bertsimas_optimal_2017} has widened the scope of exact methods for constrained clustering. 

\textbf{Our Contributions} In this paper, we propose a scalable deterministic global optimization algorithm for the minimum sum‑of‑squared clustering (MSSC) task \emph{with pairwise ML and CL constraints}. We introduce a centroid‑based pseudosample formulation for must‑link subsets, leveraging the combined information of each group to maintain the exact global minimum while reducing problem complexity. We devise geometric sample‑determination rules that eliminate cannot‑links, which specify whether points must not be placed into the same clusters before enumeration. We design a branch‑and‑bound algorithm that branches only on the cluster‑center variables. This avoids combinatorial branching on sample‑to‑cluster assignments, thus achieving a globally~$\epsilon$‑optimal solution even for large‑scale datasets. Our analysis proves convergence under exhaustive subdivisions of the feasible region for the center variables.


\textbf{Capability For More than One Hundred Thousand Scale Problems} We present an open-source implementation in \texttt{Julia} that solves constrained MSSC instances of up to 1{,}500{,}000 samples for the ML case and 200,000 samples for CL case with optimality guarantee or very low optimality gaps. This corresponds to 1500-fold and 200-fold increases in scale, respectively, over the current exact state-of-the-art~\citep{piccialli_exact_2022}. This framework thus enables deterministic global clustering solutions for large-scale datasets previously considered intractable.

\section{Mixed‑Integer Programming for Pairwise‑Constrained \textbf{$k$}‑Means} \label{sec:mip}
Given a dataset $X=\{x_1,\ldots,x_S\}\subset\mathbb{R}^m$ with $S$ samples and $m$ attributes, the MSSC task with pairwise constraints seeks a set of $k$ clusters that minimizes the Sum of Squared Errors (SSE) subject to must‑link (ML) and cannot‑link (CL) requirements:
\begingroup
  \setlength{\jot}{1pt} 
  \begin{subequations}\label{eqn:obj}
    \begin{alignat}{3}
      &\min_{b}
        &&\sum_{s\in\mathcal S}\sum_{k\in\mathcal K}b_{s,k}\,\|x_s-\mu_k\|^2_2\\
      &\text{s.t. }
        &&b_{s,k}=b_{s',k},
          &&\forall\,(s,s')\in\mathcal T_{ml},\,k\in\mathcal K,\label{eqn:ml}\\
      &{}
        &&b_{s,k}+b_{s',k}\le1,
          &&\forall\,(s,s')\in\mathcal T_{cl},\,k\in\mathcal K,\label{eqn:cl}\\
      &{}
        &&b_{s,k}\in\{0,1\},
          &&\forall\,s\in\mathcal S,\,k\in\mathcal K.\label{eqn:binary}\\
      &{}
        &&\sum_{k\in\mathcal{K}} b_{s,k}=1\label{eqn:overall:unique}
    \end{alignat}
  \end{subequations}
\endgroup


where $s\in\mathcal{S}:=\{1,\cdots,S\}$ is the data sample set, $k \in\mathcal{K}=\{1,\cdots, K\}$ is the cluster set, $\mu:= [\mu_1,\cdots,\mu_K]$, where $\mu_k\in\mathbb{R}^m $ represents the center of each cluster, $b_{s,k}\in \{0, 1\}$ is equal to 1 if $x_s$ belongs to the $k$-th clusters and 0 otherwise. $\mathcal{T}_{ml}\subseteq\mathcal{S}\times\mathcal{S}$ and $\mathcal{T}_{cl}\subseteq\mathcal{S}\times\mathcal{S}$ are the sets of tuples indicating whether samples must or must not reside in the same cluster respectively. 

The semi-supervised MSSC with pairwise constraints can be cast as an SSE optimization problem of the following form:
\begin{subequations}\label{eqn:overall}
\begin{align}
\min_{\mu,d,b}\;&\sum_{s\in\mathcal{S}} d_{s,*}\label{eqn:overall:obj}\\[2pt]
\text{s.t. }\;
&-N(1-b_{s,k})\le d_{s,*}-d_{s,k}\le N(1-b_{s,k})\label{eqn:overall:bigM}\\[2pt]
&d_{s,k}\ge\|x_s-\mu_k\|^2_2\label{eqn:overall:dis}\\[2pt]
& \mbox{Constraints~\ref{eqn:ml}-~\ref{eqn:overall:unique}}
\end{align}
\end{subequations}
Here $d_{s,k}$ is the distance between $x_s$ and $\mu_k$, $d_{s,*}$ is the distance from $x_s$ to its assigned centroid, and $N$ is a big‑$M$ constant. Define $d_s=[d_{s,1},\dots,d_{s,K},d_{s,*}]$, $d=[d_1,\dots,d_S]$, $b_s=[b_{s,1},\dots,b_{s,K}]$, $b=[b_1,\dots,b_S]$. Constraint~\eqref{eqn:overall:bigM} links $d_{s,*}$ and $d_{s,k}$ when $b_{s,k}=1$. Problem \ref{eqn:overall}
is a mixed‑integer second‑order cone program (MISOCP) and admits a two‑stage extensive form (see Appendix~\ref{sec:tssp}). While off-the-shelf solvers like Gurobi~\citep{gurobi2024} and CPLEX~\citep{cplex_users_2022} can handle small instances, they become intractable even at moderate sample sizes (e.g., $S=800$)~\citep{piccialli_exact_2022}.

\section{Reduced-space Branch-and-Bound Algorithm}
\label{sec:reduced_space_bb_proof}
Reduced-space branch-and-bound frameworks have demonstrated significant scalability gains by partitioning only the centroid search space~\citep{cao_scalable_2019}. We tailor this scheme to the pairwise-constrained clustering problem by integrating geometric probing rules derived from the MISOCP formulation in Sec.~\ref{sec:mip} to tighten both lower and upper bounds. In particular, we exploit the implicit inequality that any subregion whose lower bound exceeds the current best upper bound can be discarded outright.

\subsection{Geometric Sample Determination Rules} \label{sec:safeassign}

We first observe that every feasible clustering is subject to two straightforward geometric bounds relative to the incumbent solution. Let
$
\rho = \max_{s\in\mathcal{S}}\|\mathbf{x}_s - \boldsymbol{\mu}^{\mathrm{best}}_{k(s)}\|_2^2
$
be the worst‐case squared distance between each sample and the centroid to which it is assigned in the current best solution. Thus, $\rho$ represents the current incumbent cost. Then, for any region \(M_k\) (an axis–aligned box in \(\mathbb{R}^m\)) containing the true optimal \(\mu_k\), we can compute the minimal and maximal possible squared distances:  
\[
d_{\min}(\mathbf{x}_s, M_k) = \min_{\mu\in M_k}\|\mathbf{x}_s-\mu\|_2^2,\quad
d_{\max}(\mathbf{x}_s, M_k) = \max_{\mu\in M_k}\|\mathbf{x}_s-\mu\|_2^2.
\]
Because \(\rho\) is an upper bound on the true assignment cost, any candidate pair \((s,k)\) with \(d_{\min}(\mathbf{x}_s, M_k) > \rho\) can never be optimal. 

\begin{lemma}[Early–elimination]\label{lem:early-elim}
For any sample $s\!\in\!\mathcal{S}$ and any cluster region $M_k$,  
$d_{\min}(\mathbf{x}_s,M_k)>\rho\;\Longrightarrow\;b_{s,k}=0$  
in every optimal solution with objective value not larger than the incumbent.
\end{lemma}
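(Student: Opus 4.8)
The plan is to argue by contradiction, reducing the claim to a single chain of inequalities built from three ingredients: the branch‑and‑bound invariant on $M_k$, the definition of $d_{\min}$, and nonnegativity of the SSE summands. Concretely, suppose $(\mu^\star,b^\star)$ is a feasible solution of the constrained $k$‑means problem \eqref{eqn:obj} whose objective value is at most the incumbent cost $\rho$, and suppose, for contradiction, that $b^\star_{s,k}=1$. The first step is to record the invariant under which the lemma is stated: the box $M_k$ is maintained so that it contains the optimal center, i.e. $\mu^\star_k\in M_k$. This is exactly the standing hypothesis ``$M_k$ \dots containing the true optimal $\mu_k$'' from the setup preceding the lemma, and it is what an exhaustive subdivision of the center space guarantees for every retained node.

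Given $\mu^\star_k\in M_k$, the definition of $d_{\min}$ yields $\|\mathbf{x}_s-\mu^\star_k\|_2^2 \ge \min_{\mu\in M_k}\|\mathbf{x}_s-\mu\|_2^2 = d_{\min}(\mathbf{x}_s,M_k) > \rho$, where the last inequality is precisely the hypothesis of the lemma. Next I would expand the objective of \eqref{eqn:obj}, namely $\sum_{s'\in\mathcal S}\sum_{k'\in\mathcal K} b^\star_{s',k'}\,\|\mathbf{x}_{s'}-\mu^\star_{k'}\|_2^2$, which is a sum of nonnegative terms because each $b^\star_{s',k'}\in\{0,1\}$ and each squared norm is nonnegative. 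By the assignment constraint \eqref{eqn:overall:unique}, sample $s$ contributes exactly the single term $b^\star_{s,k}\,\|\mathbf{x}_s-\mu^\star_k\|_2^2 = \|\mathbf{x}_s-\mu^\star_k\|_2^2$. Dropping all remaining (nonnegative) terms gives the lower bound $\|\mathbf{x}_s-\mu^\star_k\|_2^2 > \rho$ on the objective, contradicting the assumption that the objective is at most $\rho$. Hence $b^\star_{s,k}=0$, as claimed.

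There is essentially no hard step here; the estimate is a one‑line argument. The only points that need care are (i) making explicit that $\rho$, as defined just before the lemma, \emph{is} the incumbent SSE, so that ``objective not larger than the incumbent'' is literally ``$\le\rho$''; and (ii) justifying $\mu^\star_k\in M_k$, which is not a property of $M_k$ in isolation but follows from correctness of the subdivision rule — each child box produced by branching still covers the optimal center, and only boxes that provably cannot contain it are discarded. I would also note the practical corollary that if, for some sample $s$, \emph{every} cluster region satisfies $d_{\min}(\mathbf{x}_s,M_k)>\rho$, then no assignment of $s$ is compatible with an objective $\le\rho$, so the entire node can be fathomed; this is how Lemma~\ref{lem:early-elim} is used inside the search.
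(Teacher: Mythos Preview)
Your argument hinges on point (i), ``$\rho$ \emph{is} the incumbent SSE'', but this misreads the definition: the paper sets $\rho = \max_{s\in\mathcal S}\|\mathbf x_s-\boldsymbol\mu^{\mathrm{best}}_{k(s)}\|_2^2$, the \emph{maximum per-sample} squared distance in the incumbent, not the total objective. (The sentence ``$\rho$ represents the current incumbent cost'' is admittedly misleading, but the displayed formula and the paper's own proof both treat $\rho$ as a per-sample bound.) With the correct reading your contradiction collapses: you deduce that the total SSE of $(\mu^\star,b^\star)$ is at least $\|\mathbf x_s-\mu_k^\star\|_2^2>\rho$, but the incumbent's total SSE is $\sum_{s'}\|\mathbf x_{s'}-\boldsymbol\mu^{\mathrm{best}}_{k(s')}\|_2^2\ge\rho$ as well, so ``total $>\rho$'' does not contradict ``total $\le$ incumbent total''.

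The paper's proof works at the per-sample level instead: since $\rho$ upper-bounds every individual sample's cost in the incumbent, an assignment $b_{s,k}=1$ with $\mu_k\in M_k$ forces sample $s$ alone to incur cost strictly exceeding $\rho$, and the paper treats this as incompatible with a solution whose overall cost does not exceed the incumbent. Your scaffolding --- contradiction, the node invariant $\mu_k^\star\in M_k$, the definition of $d_{\min}$, nonnegativity of the remaining summands --- matches the paper exactly; only the interpretation of $\rho$ and hence the quantity being compared are off.
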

\emph{Proof.}
By definition $d_{\min}(\mathbf{x}_s,M_k)=\min_{\mu\in M_k}\|\mathbf{x}_s-\mu\|_2^{2}$.
If $d_{\min}>\rho$, then for every $\mu\in M_k$ one has
$\|\mathbf{x}_s-\mu\|_2^{2}>\rho$.
Since $\rho$ is an upper bound on the per–sample cost in the incumbent,
assigning $\mathbf{x}_s$ to cluster $k$ would yield a contradiction.
Hence $b_{s,k}=0$ in any cluster whose overall cost does not exceed
the incumbent cost. \qed

A complementary rule arises from comparing the worst-case assignment cost in one region to the best-case cost in the others.

\begin{lemma}[Forced assignment]\label{lem:force-assignment}
Fix a sample $s$ and let $k^{+}\in\mathcal{K}$ satisfy  
$d_{\max}(\mathbf{x}_s,M_{k^{+}})<\min_{k\neq k^{+}}d_{\min}(\mathbf{x}_s,M_k)$.
Then $b_{s,k^{+}}=1$ in every optimal solution.
\end{lemma}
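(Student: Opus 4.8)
The plan is to prove the statement by an exchange (swap) argument. Consider the subproblem at the current node, where the centroid of each cluster $k$ is constrained to lie in the box $M_k$, and suppose for contradiction that some optimal solution $(b^\star,\mu^\star)$ of this subproblem has $b^\star_{s,k^{+}}=0$, i.e.\ it assigns $\mathbf{x}_s$ to a cluster $k'\neq k^{+}$. I will derive a contradiction by exhibiting a feasible solution of strictly smaller objective, obtained by moving $\mathbf{x}_s$ into cluster $k^{+}$ and leaving everything else unchanged.

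First I would bound the per-sample cost of $\mathbf{x}_s$ in the two relevant clusters using box containment. Since $\mu^\star_{k'}\in M_{k'}$ and $\mu^\star_{k^{+}}\in M_{k^{+}}$, the definitions of $d_{\min}$ and $d_{\max}$ together with the hypothesis give
\[
\|\mathbf{x}_s-\mu^\star_{k'}\|_2^2 \;\ge\; d_{\min}(\mathbf{x}_s,M_{k'}) \;\ge\; \min_{k\neq k^{+}} d_{\min}(\mathbf{x}_s,M_k) \;>\; d_{\max}(\mathbf{x}_s,M_{k^{+}}) \;\ge\; \|\mathbf{x}_s-\mu^\star_{k^{+}}\|_2^2 ,
\]
so reassigning $\mathbf{x}_s$ from $k'$ to $k^{+}$ strictly reduces its squared distance to its centroid. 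I would then turn this per-sample gain into a global one: in formulation~\eqref{eqn:overall} (equivalently, the must-link-collapsed reduced-space problem), with $\mu^\star$ held fixed the objective $\sum_{s}d_{s,*}$ is separable over samples, since at optimality $d_{s,*}$ equals the squared distance from $\mathbf{x}_s$ to whichever single cluster it is assigned to and is unaffected by the other assignments. Hence the modified solution $\bar b$ with $\bar b_{s,k^{+}}=1$, $\bar b_{s,k'}=0$, and $\bar b=b^\star$ elsewhere attains a strictly smaller objective while still satisfying the unique-assignment constraint~\eqref{eqn:overall:unique} and, because $\mathbf{x}_s$ is a single must-link pseudo-sample after collapsing, the must-link constraints~\eqref{eqn:ml}. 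This contradicts optimality of $(b^\star,\mu^\star)$ and forces $b_{s,k^{+}}=1$.

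The step that requires the most care — and the main obstacle — is checking that $\bar b$ remains feasible for the cannot-link constraints~\eqref{eqn:cl}: the swap is only legitimate if no sample cannot-linked to $s$ already occupies cluster $k^{+}$ in $b^\star$. I would address this by invoking the geometric cannot-link elimination of Sec.~\ref{sec:safeassign} in tandem with this rule: any sample that conflicts with placing $s$ in $k^{+}$ is itself governed by the same distance bounds, so either it too is excluded from $M_{k^{+}}$ (leaving the reassignment feasible) or the resulting infeasibility already certifies that this node's bound exceeds the incumbent, so it carries no surviving optimal solution of the assumed form. In the cannot-link-free situation the reassignment is immediately feasible and the contradiction is direct.
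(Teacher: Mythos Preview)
Your core argument is exactly the paper's: the inequality chain
\[
\|\mathbf{x}_s-\mu^\star_{k'}\|_2^2 \ge d_{\min}(\mathbf{x}_s,M_{k'}) \ge \min_{k\neq k^{+}} d_{\min}(\mathbf{x}_s,M_k) > d_{\max}(\mathbf{x}_s,M_{k^{+}}) \ge \|\mathbf{x}_s-\mu^\star_{k^{+}}\|_2^2
\]
is precisely what the paper writes. The paper simply states it directly (for arbitrary $\mu^{+}\in M_{k^{+}}$ and $\mu\in M_k$, $k\neq k^{+}$) and concludes that the unique cost-minimising assignment is $b_{s,k^{+}}=1$; your exchange/contradiction wrapping is a harmless repackaging of the same idea.

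The substantive difference is your final paragraph on cannot-link feasibility. The paper does \emph{not} address CL inside this lemma at all: it proves the lemma as a pure cost-minimisation statement, and then remarks \emph{after} the lemma that if the forced move $x_s\to k^{+}$ is forbidden by ML/CL, the node is infeasible and is pruned. Your attempt to fold CL feasibility into the proof is both unnecessary for the lemma as stated and not fully rigorous---it leans on the very geometric elimination machinery of Sec.~\ref{sec:safeassign} that this lemma is part of, and the dichotomy you sketch (``either the conflicting sample is also excluded from $M_{k^{+}}$, or the node is dominated'') is not justified by the hypothesis on $s$ alone. Drop that paragraph and your proof matches the paper's.
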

\emph{Proof.}
For any $\mu^{+}\!\in\!M_{k^{+}}$ and any $\mu\!\in\!M_{k}$ with $k\neq k^{+}$ we have  
$\|\mathbf{x}_s-\mu^{+}\|_2^{2}\le d_{\max}(\mathbf{x}_s,M_{k^{+}})
< d_{\min}(\mathbf{x}_s,M_k)\le\|\mathbf{x}_s-\mu\|_2^{2}$.
Thus the distance from $\mathbf{x}_s$ to every center in $M_{k^{+}}$ is
strictly smaller than the distance to any center in the remaining regions, implying that the unique cost–minimising assignment is $b_{s,k^{+}}=1$. \qed
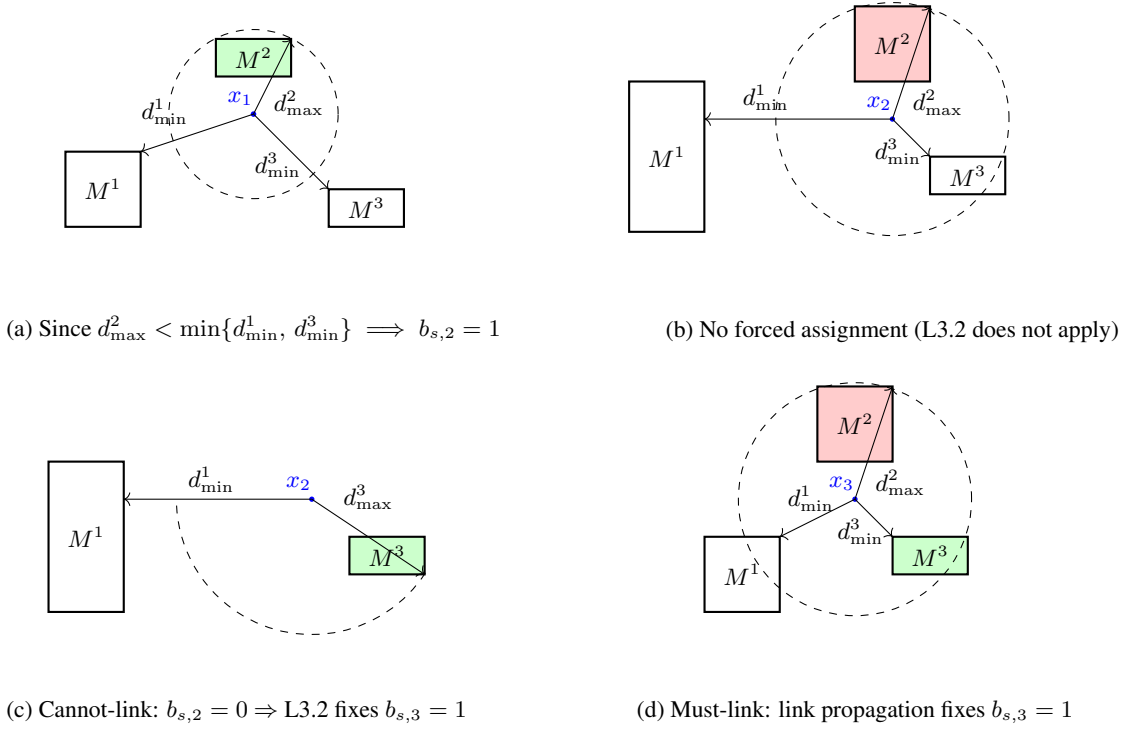
\begin{figure}[htbp]
  \centering
  \begin{minipage}[t]{0.4\textwidth}
    \centering
    \begin{tikzpicture}[scale=0.5, every node/.style={font=\small}]
      \draw[thick] (-4,0) rectangle (-2,2) node[midway] {$M^1$};
      \draw[thick, fill=green!20] ( 0,4) rectangle ( 2,5) node[midway] {$M^2$};
      \draw[thick] ( 3,0) rectangle ( 5,1) node[midway] {$M^3$};
      \coordinate (xs) at (1,3);
      \fill[blue] (xs) circle(2.1pt) node[above,xshift=-5pt] {$x_1$};
      \draw[dashed] (xs) circle(2.25);
      \coordinate (f2) at (2,5);
      \draw[<-] (f2) -- (xs);
      \path (f2) -- (xs) coordinate[pos=0.5] (m2);
      \node at ($(m2)+(20pt,-20pt)$) {\small $d^2_{\max}$};
      \coordinate (c1) at (-2,2);
      \draw[->] (xs) -- (c1) node[midway,above left] {$d^1_{\min}$};
      \coordinate (c3) at (3,1);
      \draw[->] (xs) -- (c3);
      \path (xs) -- (c3) coordinate[pos=0.5] (m3);
      \node at ($(m3)+(-10pt,-10pt)$) {\small $d^3_{\min}$};
      \node[below=30pt] at (1,0)
        {(a) Since $d^2_{\max}<\min\{d^1_{\min},\,d^3_{\min}\}\implies b_{s,2}=1$};
    \end{tikzpicture}
    \label{fig:R2_apply}
  \end{minipage}
  \hfill
  \begin{minipage}[t]{0.4\textwidth}
    \centering
    \begin{tikzpicture}[scale=0.5, every node/.style={font=\small}]
      \draw[thick] (-6,0) rectangle (-4,4) node[midway] {$M^1$};
      \draw[thick, fill=red!20]   ( 0,4) rectangle ( 2,6) node[midway] {$M^2$};
      \draw[thick] ( 2,2) rectangle ( 4,1) node[midway] {$M^3$};
      \coordinate (xs) at (1,3);
      \fill[blue] (xs) circle(2pt) node[above,xshift=-5pt] {$x_2$};
      \draw[dashed] (xs) circle(3.1);
      \coordinate (f2b) at (2,6);
      \draw[<-] (f2b) -- (xs);
      \path (f2b) -- (xs) coordinate[pos=0.5] (m2b);
      \node at ($(m2b)+(20pt,-30pt)$) {\small $d^2_{\max}$};
      \coordinate (c1b) at (-4,3);
      \draw[->] (xs) -- (c1b) node[midway,above left] {$d^1_{\min}$};
      \coordinate (c3b) at (2,2);
      \draw[->] (xs) -- (c3b);
      \path (xs) -- (c3b) coordinate[pos=0.5] (m3b);
      \node at ($(m3b)+(-10pt,-10pt)$) {\small $d^3_{\min}$};
      \node[below=30pt] at (1,0)
        {(b) No forced assignment (L\ref{lem:force-assignment} does not apply)};
    \end{tikzpicture}
    \label{fig:R2_noapply}
  \end{minipage}

  \begin{minipage}[t]{0.4\textwidth}
  \centering
  \begin{tikzpicture}[scale=0.5, every node/.style={font=\small}]
    \begin{scope}[shift={(2,0)}] 
      \draw[thick] (-6,0) rectangle (-4,4) node[midway] {$M^1$};
      \draw[thick, fill=green!20] ( 2,2) rectangle ( 4,1) node[midway] {$M^3$};
      \coordinate (xs) at (1,3);
      \fill[blue] (xs) circle(2pt) node[above,xshift=-5pt] {$x_2$};
      \draw[dashed] (xs) ++(-33:3.6) arc(-33:-180:3.6);
      \coordinate (c1b2) at (-4,3);
      \draw[->] (xs) -- (c1b2);
      \path (xs) -- (c1b2) coordinate[pos=0.5] (m1);
      \node at ($(m1)+(-5pt,15pt)$) {\small $d^1_{\min}$};
      \coordinate (c3b2) at (4,1);
      \draw[->] (xs) -- (c3b2);
      \path (xs) -- (c3b2) coordinate[pos=0.5] (m3b2);
      \node at ($(m3b2)+(0pt,30pt)$) {\small $d^3_{\max}$};
    \end{scope}
    \node[below=30pt] at (1,0)
        {(c) Cannot-link: $b_{s,2}=0$ $\Rightarrow$ L\ref{lem:force-assignment} fixes $b_{s,3}=1$};
  \end{tikzpicture}
  \label{fig:R2_copy1}
\end{minipage}
  \hfill
  \begin{minipage}[t]{0.4\textwidth}
    \centering
    \begin{tikzpicture}[scale=0.5, every node/.style={font=\small}]
      \draw[thick] (-3,0) rectangle (-1,2) node[midway] {$M^1$};
      \draw[thick, fill=red!20]   ( 0,4) rectangle ( 2,6) node[midway] {$M^2$};
      \draw[thick, fill=green!20] ( 2,2) rectangle ( 4,1) node[midway] {$M^3$};
      \coordinate (xs) at (1,3);
      \fill[blue] (xs) circle(2pt) node[above,xshift=-5pt] {$x_3$};
      \draw[dashed] (xs) circle(3.1);
      \coordinate (f2b) at (2,6);
      \draw[<-] (f2b) -- (xs);
      \path (f2b) -- (xs) coordinate[pos=0.5] (m2b);
      \node at ($(m2b)+(20pt,-30pt)$) {\small $d^2_{\max}$};
      \coordinate (c1b2b) at (-1,2);
      \draw[->] (xs) -- (c1b2b);
      \path (xs) -- (c1b2b) coordinate[pos=0.5] (m1b);
      \node at ($(m1b)+(-5pt,15pt)$) {\small $d^1_{\min}$};
      \coordinate (c3b) at (2,2);
      \draw[->] (xs) -- (c3b);
      \path (xs) -- (c3b) coordinate[pos=0.5] (m3b);
      \node at ($(m3b)+(-10pt,-10pt)$) {\small $d^3_{\min}$};
      \node[below=30pt] at (1,0)
        {(d) Must-link: link propagation fixes $b_{s,3}=1$};
    \end{tikzpicture}
    \label{fig:R2_copy2}
  \end{minipage}

  \caption{Illustration of sample‐determination via link propagation for $K=3$.}
  \label{fig:sample_det}
\end{figure}

Figure~\ref{fig:sample_det} illustrates interaction variations of geometric checks and pairwise constraints for  data points $x_1, x_2$ and $x_3$. In (a), the distance bounds immediately fix $x_1$ in $M^2$. In (b), the bounds overlap, distances are inconclusive and no assignment is made. In (c), the cannot–link $(x_1,x_2)$ rules out $M^2$ for $x_2$, after which the geometric test fixes $x_2$ in $M^3$. In (d), the must–link $(x_2,x_3)$ propagates that assignment to $x_3$. This sequence shows how geometry and ML/CL constraints jointly determine assignments prior to branching. 
If the ML/CL constraints forbid the move ($x_s \to k^+$), the node becomes infeasible and is pruned.

\subsection{Equivalent Unconstrained Clustering Problem}\label{sec:ml-collapse}

Although the geometric sample‑determination rules and link propagation of Section~\ref{sec:safeassign} eliminate most binary assignments, the remaining must‑link constraints still couple samples and inflate the \textbf{branch–and–bound} (BB) complexity. To isolate this effect, consider the ML‑only version of Problem~\eqref{eqn:overall}. Difference from the \emph{unconstrained} MSSC problem, pairwise constraint clustering problem contains a family of equalities \(b_{s,k}=b_{s',k}\) for \((s,s')\in\mathcal{T}_{ml}\). Nevertheless, we show that collapsing each must‑link component into repeated pseudo‑samples yields an unconstrained instance with identical global optimum.

Let a cluster \(\mathcal{C} = \{x_1,x_2,\dots,x_p\}\) and let \(\mu\) denote an arbitrary centroid for \(\mathcal{C}\).  Without loss of generality, assume \(x_1,\dots,x_t\in\mathcal{C}_{ml}\subseteq\mathcal{C}\) form a single \emph{must‑link component} inside \(\mathcal{C}\). Let $\mu_{ml}$ denote the centroid of $\mathcal{C}_{ml}$ and $tr(\Sigma_{ml}^2)$ the trace of the covariance matrix of $\mathcal{C}_{ml}$. Thus, we have:
$
\mu_{ml} = \frac{1}{t}\sum_{i=1}^t x_i,\quad tr(\Sigma_{ml}^2) = \frac{1}{t-1}\sum_{i=1}^t ||x_i-\mu_{ml}||^2.
$

\begin{lemma}\label{lem:cost}
Given 2 clusters, $\mathcal{C}=\{x_1, x_2,\dots, x_p\}$ and $\hat{\mathcal{C}}=\{x_{t+1}, x_{t+2},\dots,x_p, \underbrace{\mu_{ml},\dots,\mu_{ml}}_{t}\}$, let $\mathsf{sse}_{\mathcal{C}}(\mu)$ and $\mathsf{sse}_{\hat{\mathcal{C}}}(\mu)$ denote their respective within-cluster SSE computed with centroid $\mu$. Then, the following identity holds:
\begin{equation}
    \mathsf{sse}_{\mathcal{C}}(\mu) = \mathsf{sse}_{\hat{\mathcal{C}}}(\mu) + (t-1)tr(\Sigma_{ml}^2).
\end{equation}
\end{lemma}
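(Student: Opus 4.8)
The plan is to derive the identity directly from the parallel-axis (bias--variance) decomposition; the proof is short, and the only subtlety is that the SSE here is evaluated at the \emph{given} centroid $\mu$ rather than at the optimal centroid of the cluster, so the decomposition must be carried out with $\mu$ arbitrary.

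First I would expand both sides using the definition of within-cluster SSE at a fixed centroid,
\[
\mathsf{sse}_{\mathcal{C}}(\mu)=\sum_{i=1}^{p}\|x_i-\mu\|_2^2,\qquad
\mathsf{sse}_{\hat{\mathcal{C}}}(\mu)=\sum_{i=t+1}^{p}\|x_i-\mu\|_2^2+t\,\|\mu_{ml}-\mu\|_2^2,
\]
where the final term collects the $t$ identical pseudo-samples placed at $\mu_{ml}$. Subtracting, the contributions of $x_{t+1},\dots,x_p$ cancel, so it remains to prove
\[
\sum_{i=1}^{t}\|x_i-\mu\|_2^2=t\,\|\mu_{ml}-\mu\|_2^2+(t-1)\,tr(\Sigma_{ml}^2).
\]

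Next I would write $x_i-\mu=(x_i-\mu_{ml})+(\mu_{ml}-\mu)$, expand $\|x_i-\mu\|_2^2$, and sum over $i=1,\dots,t$. The cross term $2\bigl\langle \sum_{i=1}^t (x_i-\mu_{ml}),\,\mu_{ml}-\mu\bigr\rangle$ vanishes because $\mu_{ml}=\tfrac1t\sum_{i=1}^t x_i$ forces $\sum_{i=1}^t (x_i-\mu_{ml})=0$. This yields $\sum_{i=1}^t\|x_i-\mu\|_2^2=\sum_{i=1}^t\|x_i-\mu_{ml}\|_2^2+t\,\|\mu_{ml}-\mu\|_2^2$, and substituting the definition $tr(\Sigma_{ml}^2)=\tfrac{1}{t-1}\sum_{i=1}^t\|x_i-\mu_{ml}\|_2^2$ completes the identity.

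The main (and only mild) obstacle is bookkeeping rather than mathematics: one must keep $\mu$ fully arbitrary throughout, so that the lemma can later be invoked for every centroid box in the branch-and-bound tree, and one must ensure the replicated pseudo-samples contribute exactly $t$ copies of $\|\mu_{ml}-\mu\|_2^2$ rather than a single averaged term. No convexity or optimality argument is needed; the equality holds term-by-term for each $\mu$. Finally, applying the same decomposition independently to several disjoint must-link components of $\mathcal{C}$ and summing gives the additive correction $\sum_{c}(t_c-1)\,tr(\Sigma_c^2)$, which is the form used in the sequel.
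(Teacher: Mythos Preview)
Your proof is correct and follows essentially the same approach as the paper: both arguments insert and subtract $\mu_{ml}$, use $\sum_{i=1}^t(x_i-\mu_{ml})=0$ to kill the cross term, and then invoke the definition $(t-1)\,tr(\Sigma_{ml}^2)=\sum_{i=1}^t\|x_i-\mu_{ml}\|_2^2$. Your version is marginally more direct because you decompose $x_i-\mu$ in one step rather than first expanding $\|x_i\|^2$ and then inserting $\mu_{ml}$, but the substance is identical.
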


Figure~\ref{fig:pseudo-example} illustrates this construction of pseudo-samples using a small example. Based on Lemma~\ref{lem:cost}, we form the dataset:
\[
\hat{X}=\bigcup_{k\in\mathcal{K}_{ml}}\{\underbrace{\mu_{ml,k},\dots,\mu_{ml,k}}_{t_k}\}\cup \left(X\setminus\{x_s|(s,s')\in\mathcal{T}_{ml}, s'\in\mathcal{S}\}\right),
\]
along with the corresponding unconstrained MSSC optimization problem:
\begin{subequations}\label{eqn:overall_unc}
    \begin{align}
     \min  \limits_{\mu, d, b} &\;   \sum_{s \in\hat{\mathcal{S}}}  d_{s,*} + \sum_{k\in\mathcal{K}_{ml}}(t_k-1)tr(\Sigma_{ml,k}^2)    \label{eqn:overall_unc:obj}   \\ 
       \rm{s.t.} \;\; & \mbox{Constraints~\ref{eqn:overall:bigM},~\ref{eqn:overall:dis},~\ref{eqn:overall:unique},~\ref{eqn:binary}}
    \end{align}
\end{subequations}
where $\mu_{ml,k}$ and $\Sigma_{ml,k}^2$ represent the mean (centroid) and covariance matrix of the must-link samples within cluster $k$, respectively. $\mathcal{K}_{ml}$ denotes the set of clusters containing must-link samples, and $\hat{\mathcal{S}}$ is the corresponding index set for the dataset $\hat{X}$.

\begin{theorem}\label{thm:opt-una}
If \(\mu^*\) and \(z(\mu^*)\) are the global optimal solution and cost of Problem~\eqref{eqn:overall_unc}, then they are also the global optimum and cost of the ML–only problem~\eqref{eqn:overall}, and vice versa.
\end{theorem}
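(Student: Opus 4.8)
The plan is to exhibit a cost-preserving correspondence between the feasible sets of the two problems, so that the additive constant $\sum_{k\in\mathcal{K}_{ml}}(t_k-1)\,tr(\Sigma_{ml,k}^2)$ in \eqref{eqn:overall_unc:obj} exactly accounts for the within-group dispersion that is lost when each must-link component is collapsed to its centroid. First I would eliminate the auxiliary variables: for any fixed feasible assignment $b$, constraints \eqref{eqn:overall:bigM}--\eqref{eqn:overall:dis} force $d_{s,k}=\|x_s-\mu_k\|_2^2$ and $d_{s,*}=\|x_s-\mu_{k(s)}\|_2^2$ at optimality, where $k(s)$ is the unique cluster with $b_{s,k(s)}=1$ (uniqueness from \eqref{eqn:overall:unique}). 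Hence both \eqref{eqn:overall} in its ML-only form and \eqref{eqn:overall_unc} reduce to minimizing a within-cluster SSE over centroids $\mu=(\mu_1,\dots,\mu_K)$ and assignments, the two differing only by the constant and by which points are being assigned (the original samples versus $\hat X$).

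Second, I would make the must-link structure explicit. Taking the transitive closure of $\mathcal{T}_{ml}$, the equalities $b_{s,k}=b_{s',k}$ propagate along each connected component, so in every feasible solution of the ML-only problem an entire must-link component $G$ (with $|G|=t$, centroid $\mu_{ml}$) is assigned to a single cluster $\kappa(G)$. Given such $(\mu,b)$, define $\hat b$ on $\hat X$ by keeping the assignment of every non-grouped sample and sending all $t$ copies of $\mu_{ml}$ to $\kappa(G)$; this $\hat b$ is feasible for \eqref{eqn:overall_unc}, since the only remaining requirements are \eqref{eqn:binary} and \eqref{eqn:overall:unique}. Applying Lemma~\ref{lem:cost} once per component---crucially, its identity holds for the \emph{arbitrary} centroid $\mu_{\kappa(G)}$, which is exactly what is needed here---the per-component contribution $\sum_{x\in G}\|x-\mu_{\kappa(G)}\|_2^2$ equals $t\,\|\mu_{ml}-\mu_{\kappa(G)}\|_2^2+(t-1)\,tr(\Sigma_{ml}^2)$, i.e. the cost of the $t$ pseudo-samples plus the constant (if a cluster hosts several components, the constant is the sum of the corresponding terms). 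Summing over all components and all non-grouped samples shows that the objective of \eqref{eqn:overall_unc} at $(\mu,\hat b)$ equals that of the ML-only problem at $(\mu,b)$, whence the optimal value of \eqref{eqn:overall_unc} is at most that of the ML-only problem.

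Third, for the reverse inequality I would start from a global optimum $(\mu^*,\hat b^*)$ of \eqref{eqn:overall_unc}. The one genuinely non-routine point is that the $t$ identical copies of each $\mu_{ml}$ in $\hat X$ are \emph{a priori} free to be split across different clusters, a freedom with no analogue in the constrained problem. I would argue this split never helps: because the copies coincide, reassigning all of them to any single cluster minimizing $\|\mu_{ml}-\mu^*_{k}\|_2^2$ does not increase $\sum_{s}d_{s,*}$, so there is an optimal solution of \eqref{eqn:overall_unc} in which every block of repeated pseudo-samples is co-assigned (ties are broken arbitrarily, consistently within a block). Reading off $\kappa(G)$ from such a solution and assigning all of $G$ to $\kappa(G)$ (keeping the rest) yields a $b$ that is feasible for the ML-only problem---ML constraints hold by construction, CL constraints are vacuous---and the same per-component use of Lemma~\ref{lem:cost} gives equal objectives, so the optimal value of the ML-only problem is at most that of \eqref{eqn:overall_unc}.

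Combining the two inequalities gives equality of optimal values, and since both maps preserve the objective exactly they carry optimal solutions to optimal solutions in both directions; I would close by noting that the optima are attained, as the reduced SSE objective is continuous, bounded below, and the minimization over $\mu\in\mathbb{R}^{mK}$ may be restricted to a compact set (pushing a centroid to infinity only reproduces a solution using fewer clusters). The main obstacle is the co-assignment reduction in the third step---making precise that an optimal solution of the collapsed instance can always be chosen so that repeated pseudo-samples stay together, with a clean treatment of distance ties---while everything else is bookkeeping built on Lemma~\ref{lem:cost} and the parallel-axis identity it encodes.
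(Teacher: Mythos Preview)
Your proof is correct and follows essentially the same cost-preserving correspondence as the paper: both directions rely on Lemma~\ref{lem:cost} to match the SSE of a must-link component to that of its $t$ repeated centroids plus the additive constant, and both conclude by transporting optimal solutions back and forth. The paper organises the two directions slightly differently (it argues $(\Leftarrow)$ by contradiction, invoking the $(\Rightarrow)$ reconstruction to derive a strictly better ML-feasible solution), whereas you give two direct inequalities; more notably, you explicitly justify the one point the paper leaves implicit---that in an optimal solution of \eqref{eqn:overall_unc} the $t$ identical copies of each $\mu_{ml}$ can be taken to share a cluster---so your argument is in fact the cleaner of the two.
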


\paragraph{Mixed ML and CL constraints.}
When both ML and CL constraints are present, collapse each ML component as above to obtain \(\hat{X}\) with only CL constraints. Then apply Lemma~\ref{lem:early-elim} to eliminate all CL‐violating assignments, yielding an unconstrained MSSC on the reduced dataset.

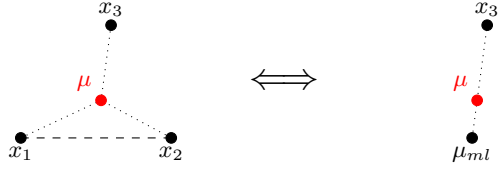
\begin{figure}[htbp]
  \centering
  \begin{tikzpicture}[scale=1,font=\footnotesize]
    \coordinate (x1) at (0,0);
    \coordinate (x2) at (2,0);
    \coordinate (x3) at (1.2,1.5);
    \draw[dashed] (x1) -- (x2);
    \coordinate (sum3) at ($(x1)+(x2)+(x3)$);
    \coordinate (barMu) at ($(0,0)!1/3!(sum3)$);
    \draw[dotted] (x1) -- (barMu);
    \draw[dotted] (x2) -- (barMu);
    \draw[dotted] (x3) -- (barMu);
    \draw[fill] (x1) circle(2pt) node[below] {\(x_1\)};
    \draw[fill] (x2) circle(2pt) node[below] {\(x_2\)};
    \draw[fill] (x3) circle(2pt) node[above] {\(x_3\)};
    \draw[fill,red] (barMu) circle(2pt) node[above left] {\(\mu\)};
    \node at (3.5,0.75) {\Large \(\Longleftrightarrow\)};

    \begin{scope}[xshift=5cm]
      \coordinate (x1r) at (0,0);
      \coordinate (x2r) at (2,0);
      \coordinate (x3r) at (1.2,1.5);
      \coordinate (sum3r) at ($(x1r)+(x2r)+(x3r)$);
      \coordinate (barMur) at ($(0,0)!1/3!(sum3r)$);
      \coordinate (x12) at ($(x1r)!0.5!(x2r)$);
      \draw[dotted] (x12) -- (x3r);
      \draw[fill] (x3r) circle(2pt) node[above] {\(x_3\)};
      \draw[fill,red] (barMur) circle(2pt) node[above left] {\(\mu\)};
      \draw[fill] (x12) circle(2pt) node[below] {\(\mu_{ml}\)};
    \end{scope}
  \end{tikzpicture}
  \caption{Left: three samples \(x_1,x_2,x_3\) with must-link \((x_1,x_2)\) and centroid \(\mu\). Right: collapse of \(\{x_1,x_2\}\) into two pseudo-samples at \(\mu_{ml}=\tfrac12(x_1+x_2)\) preserves the optimal centroid \(\mu\).}
  \label{fig:pseudo-example}
\end{figure}

\subsection{Upper Bounding Strategies}\label{sec:clt_ub}

The branch–and–bound (BB) algorithm requires a fast yet tight lower bound (LB) for each subproblem or node $M$ inside the solution space $M_0$. In this section two methods for computing upper bounds at each node of the branch–and–bound (BB) scheme are presented. The first method handles cannot-link (CL) constraints through a $k$–coloring interpretation. The second method derives a closed-form expression applicable when only must-link (ML) constraints are present. When both ML and CL constraints coexist, ML constraints can be collapsed as described in Section~\ref{sec:ml-collapse}, reformulating the problem into one involving only CL constraints and allowing the use of the $k$–coloring approach.

\paragraph{$K$–Coloring Bound for CL Constraints.}
Let $K$ denote the prescribed number of clusters and let
$G_{\mathrm{cl}}=(\hat{\mathcal S},\mathcal T_{\mathrm{cl}})$ be the CL
graph.  At node $M\subseteq M_0$ maintain a pool of
$C$ centroid candidates $\{\mu^{(c)}\}_{c=1}^{C}\subset M$.
For each candidate $c$ assign each sample $s\in\hat{\mathcal S}$ to its
closest centroid,
$
  k^{(c)}_s=\argmin_{k\in\{1,\dots,K\}}\|x_s-\mu^{(c)}_k\|_2^{2},
$
and let $\chi^{(c)}(s)=k^{(c)}_s$.
The labeling $\chi^{(c)}$ is \emph{proper} if
$\chi^{(c)}(u)\neq\chi^{(c)}(v)$ for every
$(u,v)\in\mathcal T_{\mathrm{cl}}$.
Define
\[
  z^{(c)}_{\mathrm{ub}} =
  \begin{cases}
    \displaystyle\sum_{s\in\hat{\mathcal S}}
    \|x_s-\mu^{(c)}_{\chi^{(c)}(s)}\|_2^{2}, & \text{if $\chi^{(c)}$ is proper},\\[6pt]
    +\infty, & \text{otherwise}.
  \end{cases}
\]
The node upper bound is
\(
  \alpha(M)=\min_{1\le c\le C} z^{(c)}_{\mathrm{ub}}.
\)
If $\chi^{(c)}$ is proper, then $z^{(c)}_{\mathrm{ub}}$ satisfies all CL
constraints and $z(M)\le\alpha(M)$.

\paragraph{Closed–Form Bound for ML Constraints.}
With only must–link (ML) constraints, fix any feasible centroid set $\hat{\mu}\in M$ and compute
$
  \alpha(M) = \sum_{s\in\hat{\mathcal S}} Q_s(\hat{\mu}),
$
where each $Q_s(\hat{\mu})$ admits a closed form.  The expression yields an admissible bound because every $\hat{\mu}\in M$ respects the ML constraints, implying $z(M)\le\alpha(M)$.  An initial bound is produced at the root via a heuristic ($k$‑means).  Bounds at descendant nodes are updated with candidates extracted from the relaxations in Section~\ref{sec:ld-sg}.  The BB algorithm terminates with the optimal objective value
$
  \alpha + \sum_{k\in\mathcal K_{\mathrm{ml}}} q_k \sigma_{\mathrm{ml},k}^2.
$

\subsection{Lower Bounding Strategy with Grouping-based Lagrangian Decomposition} \label{sec:ld-sg}

The branch–and–bound (BB) algorithm requires a fast yet tight lower bound (LB) for each subproblem or node $M$ inside the solution space $M_0$. An effective strategy to achieve tighter lower bounds is through Lagrangian decomposition (LD), in which the corresponding non-anticipativity constraints~\cite{cao_scalable_2019} are dualized with fixed Lagrange multipliers $\lambda$ and added to the objective function~\citep{karuppiah_lagrangean_2008}. However, to reduce problem size and improve relaxation quality, instead of associating each sample with a separate subproblem (as mentioned in~\cite{karuppiah_lagrangean_2008}), we partition the sample set $\hat{\mathcal{S}}$ into $G$ disjoint groups ${\hat{\mathcal{S}}_1, \dots, \hat{\mathcal{S}}_G}$ with index set $\mathcal{G} = {1, \dots, G}$, such that $\bigcup_{g} \hat{\mathcal{S}}_g = \hat{\mathcal{S}}$ and $\hat{\mathcal{S}}_i \cap \hat{\mathcal{S}}_g = \emptyset$ for $i \ne g$. Instead of replicating center variables per sample, we assign one per group and enforce consistency through:
\begin{subequations}\label{eqn:gp_lift_ndpb_short}
\begin{align}
\min_{\mu_g \in M} & \sum_{g \in \mathcal{G}} Q_g(\mu_g), \quad Q_g(\mu_g) := \sum_{s \in \hat{\mathcal{S}}_g} Q_s(\mu_g) \quad \ 
\text{s.t.}\; & \mu_g = \mu_{g+1}, \quad \forall g \in {1, \dots, G-1}
\end{align}
\end{subequations}

Dualizing the coupling constraints with multipliers $\lambda$ yields a tighter lower bound via:
\begin{equation}\label{eqn:ldsgdual_short}
\beta^{SG+LD}(M) := \max_{\lambda} \beta^{SG+LD}(M, \lambda)
\end{equation}

This grouped formulation preserves intra-group non-anticipativity while relaxing inter-group consistency, yielding $\beta^{LD}(M) \le \beta^{SG+LD}(M) \le z(M)$. While solving \eqref{eqn:ldsgdual_short} requires iterative MISOCPs, it significantly strengthens the bound. The grouping is fixed at the BB root for efficiency.

\subsection{Branch-and-Bound Clustering Scheme}\label{sec:bb_clst}
We adopt the framework of the reduced-space branch-and-bound scheme from~\citep{cao_scalable_2019} and tailor the algorithm for the pairwise constrained clustering task. Algorithm \ref{alg:bb_sche} depicts the details of the algorithm, where $\beta$ and $\alpha$ represent the function of lower and upper bound, respectively. With the lower and upper bounding strategy provided in the following subsections.

\begin{theorem}\label{thm:glb_cov}
    Given an exhaustive subdivision on $\mu$, Algorithm~\ref{alg:bb_sche} converges in the sense that
    \begin{equation}
        \lim \limits_{i\rightarrow\infty}\alpha_i = \lim \limits_{i\rightarrow\infty}\beta_i = z.
    \end{equation}
\end{theorem}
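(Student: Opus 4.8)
\emph{Proof plan.} The plan is to instantiate the classical branch–and–bound convergence argument (Horst--Tuy type, as specialized for the reduced‑space scheme in \citep{cao_scalable_2019}) so that the work reduces to checking two ingredients: \textbf{(i)} the bounding operators are \emph{consistent}, i.e.\ $\alpha(M)-\beta(M)\to 0$ along any nested sequence of boxes whose diameter shrinks to zero; and \textbf{(ii)} the node‑selection rule is \emph{bound‑improving}, i.e.\ it repeatedly branches a node attaining the current global lower bound. By Theorem~\ref{thm:opt-una} together with the cannot‑link elimination of Lemma~\ref{lem:early-elim}, I may assume throughout that the instance actually processed by Algorithm~\ref{alg:bb_sche} is an unconstrained MSSC on the reduced data $\hat X$ (up to the fixed additive ML offset $\sum_{k\in\mathcal K_{ml}}(t_k-1)\,tr(\Sigma_{ml,k}^2)$) over a compact root box $M_0\subset\mathbb{R}^{m\times K}$ — for concreteness the bounding box of $\hat X$, which contains an optimal centroid tuple — on which the objective $z(\mu)=\sum_{s\in\hat{\mathcal S}}\min_{k}\|x_s-\mu_k\|_2^{2}$ is continuous.

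First I would record monotonicity. Since the incumbent is only replaced by a strictly better feasible point, $\{\alpha_i\}$ is non‑increasing; since a box is deleted only after being split into children whose relaxation values are no smaller ($M'\subseteq M\Rightarrow\beta^{SG+LD}(M')\ge\beta^{SG+LD}(M)$, because shrinking $M$ shrinks the inner minimization domain for every multiplier), the global lower bound $\beta_i=\min_{M\in\text{tree}}\beta(M)$ is non‑decreasing; and $\beta_i\le z\le\alpha_i$ for all $i$ by $\beta^{SG+LD}(M)\le z(M)$ (Section~\ref{sec:ld-sg}) and by $\alpha(M)$ being the cost of a feasible clustering (Section~\ref{sec:clt_ub}). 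Hence $\bar\beta:=\lim_i\beta_i$ and $\bar\alpha:=\lim_i\alpha_i$ exist with $\bar\beta\le z\le\bar\alpha$. If the algorithm halts the open list empties and $\bar\alpha=\bar\beta=z$; so assume it runs forever. Because each generation of the subdivision tree is finite and $M_0$ is compact, the bound‑improving rule yields an infinite nested sequence of branched nodes $M_{i_1}\supset M_{i_2}\supset\cdots$ with $\beta_{i_k}=\beta(M_{i_k})$, and exhaustiveness of the subdivision forces $\mathrm{diam}(M_{i_k})\to0$, so $\bigcap_k M_{i_k}=\{\mu^{\infty}\}$ for some $\mu^{\infty}\in M_0$.

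The core is then the two consistency limits at $\mu^{\infty}$. For the lower bound I would use the sandwich $\beta^{LD}(M_{i_k})\le\beta^{SG+LD}(M_{i_k})\le z(M_{i_k})=\min_{\mu\in M_{i_k}}z(\mu)$, whose right side increases to $z(\mu^{\infty})$ by continuity of $z$; asymptotic exactness follows because $d_{\min}(x_s,M_{i_k})$ and $d_{\max}(x_s,M_{i_k})$ both converge to $\|x_s-\mu^{\infty}_{k}\|_2^{2}$ and, once every group centroid is confined to a box collapsing to a point, the coupling constraints $\mu_g=\mu_{g+1}$ of \eqref{eqn:gp_lift_ndpb_short} become vacuous, so the grouped Lagrangian value coincides in the limit with the exact cost irrespective of the root‑fixed grouping or the multipliers. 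Thus $\beta_{i_k}\to z(\mu^{\infty})$, and since $\beta_{i_k}\le z$ while $z(\mu^{\infty})\ge z$ by global optimality, we get $z(\mu^{\infty})=z$ and $\bar\beta=z$. For the upper bound, the routine of Section~\ref{sec:clt_ub} evaluates $z$ at a feasible point $\hat\mu_{i_k}\in M_{i_k}$ (a proper $K$‑coloring in the CL case — if none exists the node is fathomed as infeasible, so w.l.o.g.\ one does — or any centroid tuple in the ML‑only case), whence $\alpha_{i_k+1}\le z(\hat\mu_{i_k})$; since $\hat\mu_{i_k}\to\mu^{\infty}$, continuity gives $\bar\alpha\le z(\mu^{\infty})=z$, and with $\bar\alpha\ge z$ we conclude $\bar\alpha=z$. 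Therefore $\lim_i\alpha_i=\lim_i\beta_i=z$, and reinstating the constant ML offset on both sides preserves the identity.

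\textbf{Expected main obstacle.} The delicate point is the asymptotic exactness of the \emph{grouped} Lagrangian lower bound under a grouping that is fixed once at the root: one must rule out that the coarser decomposition pins the bound strictly below $z(\mu^{\infty})$. The resolution is as sketched — the inter‑group consistency constraints are automatically satisfied once the shared box has shrunk to a point, so the grouped dual agrees there with the undecomposed relaxation — but turning this into a rigorous statement requires a uniform (over samples and over groups) estimate of $d_{\max}(x_s,M)-d_{\min}(x_s,M)$ in terms of $\mathrm{diam}(M)$ and a continuity argument for the value of the dualized subproblems; that is the one place where an otherwise routine calculation must be carried out carefully. A secondary, lighter obstacle is the bookkeeping that the bound‑improving selection actually produces an \emph{infinite nested} subsequence of branched boxes rather than merely an infinite set of shrinking boxes, which uses finiteness of each tree level together with compactness of $M_0$.
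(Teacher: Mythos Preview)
Your proposal is correct and follows essentially the same route as the paper: reduce to the equivalent unconstrained MSSC on $\hat X$ via Theorem~\ref{thm:opt-una} and the cannot-link elimination, then invoke the Horst--Tuy / \citep{cao_scalable_2019} consistency framework to show that both bounding operators converge to $z(\mu^\infty)=z$ along any exhaustively shrinking subsequence. The paper packages these steps into three short lemmas (strong consistency of $\beta$, then appeals to Theorem~IV.3 of \citep{horst2013global} and to Lemma~6 of \citep{cao_scalable_2019}), whereas you spell out the monotonicity, the nested-subsequence extraction, and---more carefully than the paper---the reason the root-fixed grouped Lagrangian bound remains asymptotically exact; but the underlying argument is the same.
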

The proof is shown in Appendix~\ref{sec:convergence}. 


\begin{algorithm}[htbp]
   \caption{Branch-and-Bound Clustering with Geometric Sample Determination}
   \label{alg:bb_sche}
    \begin{algorithmic}
        \STATE {\bfseries Inputs:} $X=\{x_s\}_{s\in S}\subset\mathbb{R}^d$, $K$, $\mathcal{T}_{ml}$, $\mathcal{T}_{cl}$
        \STATE {\bfseries Initialization} 
        \STATE Initialize $i=0$, $\mathbb{M}\leftarrow\{M_0\}$, tolerance $\epsilon > 0$
        \STATE Compute upper bound $\alpha_i = \alpha(M_0)$, lower bound $\beta_i = \beta(M_0)$;
        \STATE {\bfseries Geometric Sample Determination}
        \STATE Compute $d_{\min}(x_s,M_{0,k})$, $d_{\max}(x_s,M_{0,k})$;
        \STATE \textbf{if} $d_{\min}(x_s,M_{0,k})>\alpha_i$ \textbf{then} $b_{s,k}\!\leftarrow\!0$ (update $K_s$);
        \STATE \textbf{if} $\{\exists\,k^+$ with $d_{\max}(x_s,M_{0,k^+})<\min_{k\neq k^+} d_{\min}(x_s,M_{0,k})\}$ \textbf{then} $b_{s,k^+}\!\leftarrow\!1$;
        \STATE Propagate fixes via $\mathcal{T}_{ml}$, $\mathcal{T}_{cl}$; update $\{K_s\}$;
        \REPEAT
            \STATE {\bfseries Node Selection} 
            \STATE Select a set $M\in\mathbb{M}$ satisfying $\beta(M)=\beta_i$;
            \STATE $\mathbb{M}\leftarrow \mathbb{M}\setminus \{M\}$;
            \STATE $i\leftarrow i+1$;
          \STATE {\bfseries Branching}
            \STATE Partition $M$ into subsets $M_1$ and $M_2$ with $relint(M_1)\cap relint(M_2) = \emptyset$;
            \STATE Add each subset to $\mathbb{M}$ to create separated child nodes;
          \STATE {\bfseries Bounding}
            \STATE Compute $\alpha(M_1)$, $\beta(M_1)$, $\alpha(M_2)$, $\beta(M_2)$;
            \STATE $\beta_i\leftarrow \min\{\beta(M')\,\mid\,M'\in\mathbb{M}\}$;
            \STATE $\alpha_i\leftarrow \min\{\alpha_{i-1}, \alpha(M_1),  \alpha(M_2)\}$;
            \STATE Remove all $M'$ from $\mathbb{M}$ if $\beta(M')\geq\alpha_i$;
            \STATE If $|\beta_i-\alpha_i|\leq\epsilon$, STOP;
        \UNTIL{$\mathbb{M}=\emptyset$}   
        \STATE {{\bfseries Output}} $\hat\mu,\hat b$ and $z^\star=\alpha_i+\sum_{k\in\mathcal{K}_{ml}}(t_k-1)\,tr(\Sigma_{ml,k}^2)$
    \end{algorithmic}
\end{algorithm}

\section{Computational Experiments}
\label{sec:experiments}
We implemented our algorithm, \textbf{Sample-Driven Constrained Group-Based Branch-and-Bound (SDC-GBB)}, in \texttt{Julia 1.10.3} and evaluated its performance on synthetic and real-world datasets using a high-performance cluster comprising nodes with 128 AMD Epyc 7702 CPUs (2.0GHz) and 1TB of RAM. Computational experiments were conducted under both serial and parallel configurations, comparing SDC-GBB against the branch-and-bound (BB) algorithm in CPLEX 22.1.1 \citep{cplex_users_2022}, the exact method PC-SOS-SDP \citep{piccialli_exact_2022}, and the best heuristic out of the following algorithms: COP-$k$-means \citep{wagstaff_constrained_2001}, encode-kmeans-post \citep{nghiem_constrained_2020}, BLPKM-CC \citep{baumann2020binary}, and Sensitivity Sampling coreset algorithm \citep{feldman2011unified}. All heuristic algorithms were run with 100 restarts \citep{wagstaff_constrained_2001} in 4 hours, and the full results are reported in Appendix~\ref{sec:b_heuristic}. For parallel applications, subproblems were distributed across multiple CPU cores with group sizes limited to $\min(162/d - k, 10 \times k)$ during the lower bound decomposition process. Performance was assessed using Upper Bound (UB), relative optimality gap, and the number of BB nodes resolved. UB represents the best feasible solution found, while the relative optimality gap is calculated as \(\frac{\alpha_l - \beta_l}{\min(\alpha_l,\beta_l)}\times 100\%\), where $\alpha_l$ and $\beta_l$ denote the best lower and upper bounds, respectively. The number of resolved BB nodes indicates the total BB iterations performed. Unlike heuristic methods, deterministic global optimization methods provide an optimality gap, enabling quantitative assessment of solution quality.

We evaluate the selected algorithms on 8 real-world datasets either taken or sampled  from the UCI Machine Learning Repository \citep{Dua:2019}, Hemicellulose \citep{wang_predicting_2022}, PR2392 \citep{padberg_branch-and-cut_1991}, and 7 synthetic datasets generated with 2 features, 3 Gaussian clusters and a fixed random seed (\texttt{seed = 1}). Datasets are categorized as small ($n \leq 1{,}000$), medium ($n \leq 10{,}000$), large ($n \leq 100{,}000$), and huge ($n \geq 100{,}000$), where $n$ is the number of samples. We follow the pairwise constraint generation practice with the same classic random‑pair sampling pipeline as \citep{piccialli_exact_2022, aloise_np-hardness_2009, babaki_constrained_2014, guns_repetitive_2016}. Across three separate experiments, namely must-link only (ML-only), cannot-link only (CL-only), and both must-link and cannot-link (ML+CL), each dataset has $\tfrac{n}{4}$ samples bounded by ML constraints, $\tfrac{n}{4}$ samples bounded by CL constraints, and combining $\tfrac{n}{4}$ ML with $\tfrac{n}{4}$ CL constraints respectively. when the optimality gap dropped below 0.1 \%, when runtime reached 4 hours for datasets with $n\le 10{,}000$ or 12 h for those with $n>10{,}000$, or when 5 million nodes had been explored.

\subsection{Numerical Results}

Our work focuses explicitly on solution quality in terms of the MSSC cost, providing an optimality guarantee. With this, SDC-GBB matches the performance of commercial solvers and the state-of-the-art algorithm PC-SOS-SDP~\citep{piccialli_exact_2022} through \emph{super-point aggregation, targeted decomposition, and tight bounding via geometric partitioning} under ML constraints, as well as \emph{geometric sample determination rules} that prune infeasible assignments under CL constraints. SDC-GBB successfully handles instances exceeding two hundred thousand samples across all constraint cases and further scales to ML-only instances with more than 1.5 million samples.

\paragraph{Small and medium-sized datasets}
For small datasets, SDC-GBB matches the state-of-the-art PC-SOS-SDP algorithm, outperforming CPLEX and heuristic methods across all constraint settings, as shown in Tables \ref{tab:must_link}, \ref{tab:cannot_link}, \ref{tab:both_link}. On real-world benchmarks, SDC-GBB and PC-SOS-SDP achieve global optimality with gaps $\leq 0.1\%$ on Iris (\(n = 150\)) \citep{fisher_use_1936} and Seeds (\(n = 210\)) \citep{charytanowicz_complete_2010}. The heuristic method also closely approximates the global optima, whereas CPLEX yields significantly larger gaps of around 10\%–69\% for ML, 73\%–86\% for CL, and 37\%–75\% for combined constraints. 
In experiments with medium-sized datasets, SDC-GBB consistently outperforms all other algorithms, achieving optimality gaps $\leq 0.1\%$ in nearly all cases, with exceptions in PR2392 (CL-only) and RDS\_CNT (CL-only), where gaps slightly increase to 2.68\% and 1.23\%, respectively. The best heuristic method consistently performs slightly worse than SDC-GBB but remains competitive, providing relatively small gaps across datasets. Conversely, CPLEX returns gaps close to 100\% across all medium-sized datasets, and PC-SOS-SDP either generates higher gaps than SDC-GBB or fails to converge for datasets with $\geq 2{,}000$ samples.

\begin{table}[htbp]
\caption{Computational performance with must-link (SDC-GBB, $k = 3$).}
\vspace{-3mm}
\label{tab:must_link}
\begin{center}
\begin{small}
\begin{sc}
\resizebox{0.92\textwidth}{!}{%
\begin{tabular}{ll|ccc|ll|ccc}
\toprule
\multicolumn{10}{c}{Real-world datasets} \\
\midrule
\cmidrule(r){1-5} \cmidrule(r){6-10}
Dataset & Method & UB & Nodes & Gap\,(\%) 
 & Dataset & Method & UB & Nodes & Gap\,(\%) \\
\midrule
Iris\textsuperscript{2}& Heuristic   & 83.82& -- & -- 
        & HTRU2    & Heuristic   & $1.407\times10^8$& -- & -- \\
n = 150        
& CPLEX       & 84.07& 12987400& $10.55\%$ 
        &         n = 17,898& CPLEX       & \multicolumn{3}{c}{No feasible solution found}\\
d = 4        
& PC-SOS-SDP  & \textbf{83.63}& 1& \textbf{\boldmath$\leq 0.1\%$}
        &         d = 8& PC-SOS-SDP  & \multicolumn{3}{c}{No solution found}\\
& Parallel    &         \textbf{83.63}& 10 & \textbf{\boldmath$\leq 0.1\%$}
        &         & Parallel    &         \textbf{\boldmath$1.022 \times10^8$}& 67 & \textbf{\boldmath$\leq 0.1\%$}\\
\midrule
Seed\textsuperscript{2}& Heuristic   & 620.78& -- & -- 
        & SPNET3D\_5\textsuperscript{3}& Heuristic   & $6.627\times10^6$& -- & -- \\
        n = 210
& CPLEX       & 755.91& 5814200& $69.48\%$
        &         n = 50,000& CPLEX       & \multicolumn{3}{c}{No feasible solution found}\\
        d=7
& PC-SOS-SDP  & \textbf{620.23}& 1& \textbf{\boldmath$\leq 0.1\%$}
        &         d = 3& PC-SOS-SDP  & \multicolumn{3}{c}{No solution found}\\
& Parallel    &         \textbf{620.23}& 7 & \textbf{\boldmath$\leq 0.1\%$}
        &        & Parallel    & \textbf{\boldmath$6.609 \times10^6$}& 61& \textbf{\boldmath$1.51\%$}\\
\midrule
Hemi\textsuperscript{2}& Heuristic   & $1.602\times10^7$& -- & -- 
        & Skin\_8& Heuristic   & $4.533\times10^8$& -- & -- \\
        n = 1,955
& CPLEX       & $3.204\times10^7$& 90400& $96.26\%$
        &         n = 80,000& CPLEX       & \multicolumn{3}{c}{No feasible solution found}\\
        d = 7
& PC-SOS-SDP  & $1.601\times10^7$& 1& $2.07\%$
        &         d = 3& PC-SOS-SDP  & \multicolumn{3}{c}{No solution found}\\
& Parallel    & \textbf{\boldmath$1.601\times10^{7}$}& 8 & \textbf{\boldmath$\leq 0.1\%$}
        &         & Parallel    & \textbf{\boldmath$4.138\times10^8$}& 12& \textbf{\boldmath$0.62\%$}\\
\midrule
PR2392\textsuperscript{2}& Heuristic   & $3.210\times10^{10}$& -- & -- 
        & URBANGB    
& Heuristic   & $1.643\times10^9$& -- & -- \\
        n = 2,392
& CPLEX       & $3.816\times10^{10}$& 281000& $98.76\%$
        &         n = 360,177& CPLEX       & \multicolumn{3}{c}{No feasible solution found}\\
        d = 2
& PC-SOS-SDP  & \multicolumn{3}{c|}{No solution found}
        &         d = 2& PC-SOS-SDP  & \multicolumn{3}{c}{No solution found}\\
& Parallel    & \textbf{\boldmath$3.209 \times 10^{10}$}& 22& \textbf{\boldmath$\leq 0.1\%$}
        &         & Parallel    & \textbf{\boldmath$4.135 \times 10^{5}$}& 2    & \textbf{\boldmath$12.97\%$}\\
\midrule
RDS\_CNT\textsuperscript{2}& Heuristic   & $6.122\times10^7$& -- & -- 
        & SPNET3D    & Heuristic   & $5.848\times10^7$& -- & -- \\
        n = 10,000
& CPLEX       & $1.154\times10^8$& 12600& $100.00\%$
        &         n = 434,874& CPLEX       & \multicolumn{3}{c}{No feasible solution found}\\
        d = 3
& PC-SOS-SDP  & \multicolumn{3}{c|}{No solution found}
        &         d = 3& PC-SOS-SDP  & \multicolumn{3}{c}{No solution found}\\
& Parallel    & \textbf{\boldmath$6.078\times10^7$}& 32& \textbf{\boldmath$\leq 0.1\%$}
        &         & Parallel    & \textbf{\boldmath$5.797 \times 10^{7}$}& 2    & \textbf{\boldmath$7.43\%$}\\       
\midrule
\multicolumn{10}{c}{Synthetic datasets (d = 2)} \\
\midrule
Syn-210000& Heuristic   & $2.163\times10^6$& -- & -- & Syn-1050000& Heuristic   & $1.050\times10^7$& -- & -- \\
        n = 210,000& CPLEX       & \multicolumn{3}{c|}{No feasible solution found} &         n = 1,050,000& CPLEX       & \multicolumn{3}{c}{No feasible solution found}\\
        & PC-SOS-SDP  & \multicolumn{3}{c|}{No solution found}
        &         & PC-SOS-SDP  & \multicolumn{3}{c}{No solution found}\\
& Parallel    & \textbf{\boldmath$2.163 \times10^6$}& 26  & \textbf{\boldmath$\leq 0.1\%$}&         & Parallel    & \textbf{\boldmath$1.050 \times 10^{7}$}& 1 & \textbf{\boldmath$2.45\%$}\\ 
\midrule
Syn-420000& Heuristic   & $6.010\times10^6$& -- & -- & Syn-1500000& Heuristic   & $1.725\times10^7$& -- & -- \\
        n = 420,000& CPLEX       & \multicolumn{3}{c|}{No feasible solution found} &         n = 1,500,000& CPLEX       & \multicolumn{3}{c}{No feasible solution found}\\
        & PC-SOS-SDP  & \multicolumn{3}{c|}{No solution found}
        &         & PC-SOS-SDP  & \multicolumn{3}{c}{No solution found}\\
& Parallel    & \textbf{\boldmath$6.010 \times10^6$}& 18  & \textbf{\boldmath$0.61\%$}&         & Parallel    & \textbf{\boldmath$1.725\times 10^7$}& 1 & \textbf{\boldmath$2.66\%$}\\

\bottomrule
\end{tabular}
}
\end{sc}
\end{small}
\end{center}
\end{table}

\begin{table}[htbp]
\caption{Computational performance with cannot-link (SDC-GBB, $k = 3$).}
\vspace{-5mm}
\label{tab:cannot_link}
\begin{center}
\begin{small}
\begin{sc}
\resizebox{0.92\textwidth}{!}{%
\begin{tabular}{ll|ccc|ll|ccc}
\toprule
\multicolumn{10}{c}{Real-world datasets} \\
\midrule
\cmidrule(r){1-5} \cmidrule(r){6-10}
Dataset & Method & UB & Nodes & Gap\,(\%) 
 & Dataset & Method & UB & Nodes & Gap\,(\%) \\
\midrule
Iris\textsuperscript{2}& Heuristic   & 80.31& -- & -- 
        & RDS\_CNT\textsuperscript{2}& Heuristic   & $2.897\times10^7$& -- & -- \\
n = 150 & CPLEX       & 119.03 & 8259900 & $73.59\%$ 
        & n = 10,000  & CPLEX       & $5.198\times10^7$& 11559& $100.00\%$ \\
d = 4   & PC-SOS-SDP  & \textbf{80.21}& 1 & \textbf{\boldmath$\leq 0.1\%$}
        & d = 3       & PC-SOS-SDP  & \multicolumn{3}{c}{No solution found} \\
        & Parallel    & \textbf{80.21}& 17 & \textbf{\boldmath$\leq 0.1\%$}
        &             & Parallel    & \textbf{\boldmath$2.861\times10^7$}& 49 & $1.23\%$ \\
\midrule
Seed\textsuperscript{2}& Heuristic   & 603.04& -- & -- 
        & HTRU2\textsuperscript{3}& Heuristic   & $1.740\times10^8$& -- & -- \\
n = 210 & CPLEX       & 771.54 & 5244683 & $86.67\%$ 
        & n = 17,898  & CPLEX       & \multicolumn{3}{c}{No feasible solution found} \\
d = 7   & PC-SOS-SDP  & \textbf{601.96}& 15 & \textbf{\boldmath$\leq 0.1\%$}
        & d = 8       & PC-SOS-SDP  & \multicolumn{3}{c}{No solution found} \\
        & Parallel    & \textbf{601.96}& 18 & \textbf{\boldmath$\leq 0.1\%$}
        &             & Parallel    & \textbf{\boldmath$9.225\times10^7$}& 15 & \textbf{\boldmath$\leq 0.1\%$}\\
\midrule
Hemi\textsuperscript{2}& Heuristic   & $1.401\times10^7$& -- & -- 
        & SPNET3D\_5& Heuristic   & $3.938\times10^6$& -- & -- \\
n = 1,955& CPLEX       & $2.667\times10^7$ & 65002 & $100.00\%$ 
        & n = 50,000  & CPLEX       & \multicolumn{3}{c}{No feasible solution found} \\
d = 7   & PC-SOS-SDP  & $1.328\times10^7$ & $i$\footnotemark[4]& $17.70\%$ 
        & d = 3       & PC-SOS-SDP  & \multicolumn{3}{c}{No solution found} \\
        & Parallel    & \textbf{\boldmath$1.328\times10^7$}& 5 & \textbf{\boldmath$\leq 0.1\%$}
        &             & Parallel    & \textbf{\boldmath$3.831\times10^6$}& 34 & \textbf{\boldmath$\leq 0.1\%$}\\
\midrule
PR2392\textsuperscript{2}& Heuristic   & $2.566\times10^{10}$& -- & -- 
        & Skin\_8& Heuristic   & $6.367\times10^8$& -- & -- \\
n = 2,392 & CPLEX     & $3.266\times10^{10}$ & 256964 & $99.96\%$
        & n = 80,000  & CPLEX       & \multicolumn{3}{c}{No feasible solution found} \\
d = 2   & PC-SOS-SDP  & \multicolumn{3}{c}{No solution found} 
        & d = 3       & PC-SOS-SDP  & \multicolumn{3}{c}{No solution found} \\
        & Parallel    & \textbf{\boldmath$2.512\times10^{10}$}& 97 & \textbf{\boldmath$2.68\%$}
        &             & Parallel    & \textbf{\boldmath$3.016\times10^8$}& 8 & \textbf{\boldmath$1.32\%$} \\
\midrule
\multicolumn{10}{c}{Synthetic datasets (d = 2)} \\
\midrule
Syn-12000& Heuristic   & $9.503\times10^4$& -- & -- & Syn-42000& Heuristic   & $5.116\times10^5$& -- & -- \\
        n = 12,000& CPLEX       & \multicolumn{3}{c|}{No feasible solution found} &         n = 42,000& CPLEX       & \multicolumn{3}{c}{No feasible solution found}\\
        & PC-SOS-SDP  & \multicolumn{3}{c|}{No solution found}
        &         & PC-SOS-SDP  & \multicolumn{3}{c}{No solution found}\\
& Parallel    & \textbf{\boldmath$9.053\times10^4$}& 89  & \textbf{\boldmath$0.75\%$}&         & Parallel    & \textbf{\boldmath$5.116\times10^5$}& 22 & \textbf{\boldmath$\leq 0.1\%$}\\ 
\midrule
Syn-21000& Heuristic   & $1.817\times10^5$& -- & -- & Syn-210000& Heuristic   & $2.161\times10^6$& -- &         --\\
        n = 21,000& CPLEX       & \multicolumn{3}{c|}{No feasible solution found} &         n = 210,000& CPLEX       & \multicolumn{3}{c}{No feasible solution found}\\
        & PC-SOS-SDP  & \multicolumn{3}{c|}{No solution found}
        &         & PC-SOS-SDP  & \multicolumn{3}{c}{No solution found}\\
& Parallel    & \textbf{\boldmath$1.817\times10^5$}& 27  & \textbf{\boldmath$0.32\%$}&         & Parallel    & \textbf{\boldmath$2.161\times10^6$}& 1 & \textbf{\boldmath$2.38\%$}\\
\bottomrule
\end{tabular}%
}
\end{sc}
\end{small}
\end{center}
\vspace{-5mm}
\end{table}

\paragraph{Large and huge datasets}
Only SDC-GBB and heuristic algorithms can handle $n >$ 10{,}000 datasets, with SDC-GBB getting better UB and reaching global optimality or maintaining stable gaps below 3\% in all datasets and experiments other than URBANGB and SPNET3D, which receive 5\% - 13\% gaps. Although the gaps we achieved with these instances are not optimal, they can be further optimized with increased parallelization, and it is worth noting that no other global optimization method can find solutions at this scale. Meanwhile, both PC-SOS-SDP and CPLEX cannot find any feasible solution for these datasets given the 12-hour time limit. This shows that SDC-GBB can scale up to 1{,}500 times for ML-only constraints, and 200 times for CL-only and ML+CL constraints when compared with state-of-the-art algorithms.

\begin{table}[htbp]
\caption{Computational performance with both must-link and cannot-link (SDC-GBB, $k=3$).}
\vspace{-3mm}
\label{tab:both_link}
\centering 
\captionsetup{width=\textwidth, justification=centering} 
\begin{sc}
\begin{threeparttable}
\resizebox{0.92\textwidth}{!}{%
\begin{tabular}{ll|ccc|ll|ccc}
\toprule
\multicolumn{10}{c}{Real-world datasets} \\
\midrule
\cmidrule(r){1-5} \cmidrule(r){6-10}
Dataset & Method & UB & Nodes & Gap\,(\%) 
 & Dataset & Method & UB & Nodes & Gap\,(\%) \\
\midrule
Iris\textsuperscript{2}& Heuristic   & 86.85 & -- & -- & RDS\_CNT\textsuperscript{2}& Heuristic   & $7.579\times10^7$ & -- & -- \\
n = 150 & CPLEX       & 93.75 & 9166269 & 37.47\% & n = 10,000  & CPLEX       & $1.493\times10^8$ & 4100 & 100.00\% \\
d = 4   & PC-SOS-SDP  & \textbf{86.76}& 3 & \textbf{\boldmath$\leq 0.1\%$}& d = 3       & PC-SOS-SDP  & \multicolumn{3}{c}{No solution found} \\
        & Parallel    & \textbf{86.76}& 17 & \textbf{\boldmath$\leq 0.1\%$}&             & Parallel    & \textbf{\boldmath$7.437\times10^7$}& 32 & \textbf{\boldmath$\leq 0.1\%$}\\
\midrule
Seed\textsuperscript{2}& Heuristic   & 597.14 & -- & -- & HTRU2\textsuperscript{3}& Heuristic   & $1.859\times10^8$ & -- & -- \\
n = 210 & CPLEX       & 760.73& 6044677 & 75.32\% & n = 17,898  & CPLEX       & \multicolumn{3}{c}{No feasible solution found} \\
d = 7   & PC-SOS-SDP  & \textbf{596.61}& 5 & \textbf{\boldmath$\leq 0.1\%$}& d = 8       & PC-SOS-SDP  & \multicolumn{3}{c}{No solution found} \\
        & Parallel    & \textbf{596.61}& 9 & \textbf{\boldmath$\leq 0.1\%$}&             & Parallel    & \textbf{\boldmath$1.141\times10^8$}& 79 & \textbf{\boldmath$\leq 0.1\%$}\\
\midrule
Hemi\textsuperscript{2}& Heuristic   & $1.566\times10^7$ & -- & -- & SPNET3D\_5\textsuperscript{3}& Heuristic & $8.171\times10^6$ & -- & -- \\
n = 1,955& CPLEX       & $3.519\times10^7$ & 39600 & 100.00\% & n = 50,000 & CPLEX & \multicolumn{3}{c}{No feasible solution found} \\
d = 7   & PC-SOS-SDP  & \multicolumn{3}{c|}{No solution found} & d = 3 & PC-SOS-SDP  & \multicolumn{3}{c}{No solution found} \\
        & Parallel    & \textbf{\boldmath$1.533\times10^7$}& 115 & \textbf{\boldmath$\leq 0.1\%$}&             & Parallel & \textbf{\boldmath$7.925\times10^6$}& 45 & \textbf{\boldmath5.01\%}\\
\midrule
PR2392\textsuperscript{2}& Heuristic   & $2.922\times10^{10}$ & -- & -- & Skin\_8 & Heuristic & $7.579\times10^8$ & -- & -- \\
n = 2,392 & CPLEX     & $3.240\times10^{10}$ & 239465 & 99.50\% & n = 80,000 & CPLEX & \multicolumn{3}{c}{No feasible solution found} \\
d = 2   & PC-SOS-SDP  & \multicolumn{3}{c|}{No solution found} 
        & d = 3 & PC-SOS-SDP & \multicolumn{3}{c}{No solution found} \\
        & Parallel    & \textbf{\boldmath$2.916\times10^{10}$}& 35 & \textbf{\boldmath$\leq 0.1\%$}&             & Parallel & \textbf{\boldmath$4.258\times10^8$}& 11 & \textbf{\boldmath4.86\%}\\
\midrule
\multicolumn{10}{c}{Synthetic datasets (d = 2)} \\
\midrule
Syn-12000& Heuristic   & $9.520\times10^4$ & -- & -- & Syn-42000& Heuristic   & $5.133\times10^5$ & -- & -- \\
        n = 12,000& CPLEX       & \multicolumn{3}{c|}{No feasible solution found} &         n = 42,000& CPLEX       & \multicolumn{3}{c}{No feasible solution found}\\
        & PC-SOS-SDP  & \multicolumn{3}{c|}{No solution found}
        &         & PC-SOS-SDP  & \multicolumn{3}{c}{No solution found}\\
& Parallel    & \textbf{\boldmath$9.520\times10^4$}& 25 & \textbf{\boldmath$\leq 0.1\%$}&         & Parallel    & \textbf{\boldmath$5.133\times10^5$}& 31 & \textbf{\boldmath1.37\%}\\ 
\midrule
Syn-21000& Heuristic   & $1.818\times10^5$ & -- & -- & Syn-210000& Heuristic   & $2.165\times10^6$ & -- &         -- \\
        n = 21,000& CPLEX       & \multicolumn{3}{c|}{No feasible solution found} &         n = 210,000& CPLEX       & \multicolumn{3}{c}{No feasible solution found}\\
        & PC-SOS-SDP  & \multicolumn{3}{c|}{No solution found}
        &         & PC-SOS-SDP  & \multicolumn{3}{c}{No solution found}\\
& Parallel    & \textbf{\boldmath$1.818\times10^5$}& 37 & 0.18\% &         & Parallel    & \textbf{\boldmath$2.164\times10^6$}& 28 & \textbf{\boldmath0.64\%}\\
\bottomrule
\end{tabular}
}
\begin{tablenotes}
\tiny
\item[2] Less than 4 hours.
\item[3] Less than 8 hours.
\item[4] Solved at the root node.
\end{tablenotes}
\end{threeparttable}
\end{sc}
\vspace{-3mm}
\end{table}

\section{Conclusion}
\label{sec:conclusion}
In this paper, we presented Sample-Driven Constrained Group-Based Branch-and-Bound (SDC-GBB), a deterministic global optimization algorithm for pairwise-constrained MSSC. We prove convergence to a globally $\varepsilon$-optimal solution and demonstrate scalability to datasets exceeding 200{,}000 samples in all constraint settings, which is \textbf{over 200 times larger} than the 800-sample benchmark of~\citep{piccialli_exact_2022}, and further extend to \textbf{over 1,500 times larger} with  ML-only instances having more than one million samples while maintaining optimality gaps below 3\%. When empirically evaluated on real-world benchmarks of various domains, SDC-GBB consistently achieves low optimality gaps across diverse constraint settings.

\section{Limitations and Future Directions} 
Similar to prior work, our algorithm struggles to scale to one million samples with CL constraints due to the NP-hard nature of this constraint type. Future work may consider tightening the grouped-sample Lagrangian lower bound (Section \ref{sec:ld-sg}) by dualizing the CL graph with relaxing binary indicators to $[0{,}1]$ and penalty multipliers $z_{ij}$ updated via subgradient methods. This formulation would produce a more compact MISOCP with fewer active binary variables thanks to the relaxation of $z_{ij}$ and stronger continuous bounds, reducing the number of branches in dense CL graphs, as shown by successful Lagrangian-penalty approaches in semi-supervised clustering and global MISOCP strategies. An alternative is to incorporate clique inequalities or separation cuts for the CL graph, following MIP methodologies that substantially improve bounds. However, including these Lagrangian terms at each node would increase the solve time of the relaxation, so empirically evaluating the trade-off between bound improvement and per-node cost will be key.

\paragraph{Ethics Statement}
Our study uses the SKIN (Skin Segmentation) dataset from the UCI Machine Learning Repository. The dataset contains de-identified RGB pixel triplets (B, G, R) sampled from facial images and is derived from two collections: the PAL Face Database and the DARPA-sponsored Color FERET images. All human participants provided consent for the use of these collections for research purposes. In line with data-privacy best practices, the UCI release exposes only anonymized pixel triplets and does not include raw images, which reduces the risk of re-identification in downstream work. Concurrently,
our framework optimizes only the Minimum Sum-of-Squares Criteria (MSSC) objective, and as is well documented, MSSC/k-means–style clustering can reproduce and even amplify existing biases in the data, particularly at scale. In high-stakes or sensitive domains, deployments that do not account for these effects can lead to disparate treatment or outcomes across demographic groups. We recommend that practitioners perform fairness audits before deployment, and consider mitigation techniques such as fair clustering variants and post-processing adjustments if such disparities arise. The absence of fairness-aware safeguards in the current implementation is a limitation of this work, and integrating such constraints or corrections is left for future extensions.

\paragraph{Reproducibility Statement}
In this section, we outline necessary details for reproducing all experiments described in the paper. We provide comprehensive hardware and software configuration for SDC-GBB as well as pairwise constraint generation, seeding protocol, runtime limit and data setup, including real and synthetic dataset generation in Section \ref{sec:experiments}. The implementation of the SDC-GBB algorithm is thoroughly described through Algorithm \ref{alg:bb_sche}. Lastly, we document evaluation in Section \ref{sec:experiments} along with the results of all baselines used for comparison in Sections \ref{sec:add_experiments} and \ref{sec:b_heuristic}. 
To further support the reproducibility of our results, we will release our experiment code upon acceptance, enabling other researchers to replicate and expand on our work.


\bibliography{icml_paper}
\bibliographystyle{iclr2026_conference}

\appendix
\newpage

\section{Two‑stage Programs Reformulation}\label{sec:tssp}

Once the must‑link set $\mathcal T_{ml}$ has been collapsed into pseudo‑samples
(Sec.~\ref{sec:ml-collapse}), the additive constant
$(t-1)\,{\rm tr}(\Sigma_{ml}^{2})$ can be pre‑computed.
Hence minimising the objective in \eqref{eqn:overall_unc:obj} is equivalent to
minimising $\sum_{s\in\hat{\mathcal S}}d_{s,*}$. The optimal solution of \eqref{eqn:overall_unc} is obtained from the
two‑stage program:
\begin{equation}
    z \;=\;
    \min_{\mu\in M_0}\;
    \sum_{s\in\hat{\mathcal S}} Q_s(\mu),
    \label{eqn:clt_pbm}
\end{equation}
where $\mu$ are the \emph{first‑stage} variables and
$Q_s(\mu)$ is the optimal value of the \emph{second‑stage} problem defined
below. After (i) collapsing must‑link components and
(ii) applying the geometric Lemmas 1–2 together with
cannot‑link propagation, each sample $s$ may still be assigned only to a
\emph{viable} subset $\mathcal K_s\subseteq\mathcal K$.
The reduced dataset $\hat{\mathcal S}$ and the family
$\{\mathcal K_s\}_{s\in\hat{\mathcal S}}$ are fixed once at the root node.

\begin{equation}
\label{eqn:km_opt}
\begin{aligned}
Q_s(\mu)=
\min_{d_s,b_s}\;&\;d_{s,*}\\[2pt]
\text{s.t. }\;
&\mbox{Constraints~\ref{eqn:overall:bigM},~\ref{eqn:overall:dis},~\ref{eqn:overall:unique},~\ref{eqn:binary}}.
\end{aligned}
\end{equation}

Here $d_s=[d_{s,k}]_{k\in\mathcal{K}_s}$ and
$b_s=[b_{s,k}]_{k\in\mathcal{K}_s}$ are the \emph{second‑stage} variables.
The closed set
$M_0=\{\mu\mid\mu^l\le\mu\le\mu^u\}$ bounds every centre with
$\mu_{k,i}^l=\min_{s}X_{s,i}$
and $\mu_{k,i}^u=\max_{s}X_{s,i}$ for all
$k\in\mathcal{K}$ and $i=1,\dots,m$.
For convenience we choose a \emph{single} Big‑$M$ constant
\[
  N=\max_{s,k}\sum_{i=1}^{m}
       \max\!\bigl\{|x_{s,i}-\mu_{k,i}^l|^{2},
                    |x_{s,i}-\mu_{k,i}^u|^{2}\bigr\},
\]
which leaves all bounds valid and simplifies the notation.
The bounds $\mu^l,\mu^u$ are computed once at the root node and inherited
unchanged by every BB subproblem.
We denote by ${\rm relint}(\mathcal{M})$ and $\delta(\mathcal{M})$ the
relative interior and the diameter of a set. Throughout this paper, the diameter of the box set $M_0$ is $\delta(M_0)=||\mu^u-\mu^l||_{\infty}$. 

It can be shown that the closed-form solution to the second-stage problem is
\[
  Q_s(\mu)=\min_{k\in\mathcal{K}_s}\|x_s-\mu_k\|_2^{2}.
\]

Since $Q_s(\mu)$ is the minimum of a finite number of continuous functions, $Q_s$ is continuous. Because of the compactness of $M_0$ and continuity of $Q_s(\mu)$, the clustering Problem \ref{eqn:clt_pbm} can attain its minimum according to the generalized Weierstrass theorem.

When the BB algorithm explores a box $M\subseteq M_0$,
it solves the \emph{primal node problem}
\begin{equation}
\label{eqn:clt_ndpb}
  z(M)=\min_{\mu\in M}\;\sum_{s\in\hat{\mathcal{S}}} Q_s(\mu).
\end{equation}
Replicating centres for each sample and enforcing
non‑anticipativity\,\eqref{eqn:non-anticipativity} yields the lifted form
\begin{subequations}
\label{eqn:clt_lift_ndpb}
\begin{align}
\min_{\mu_s\in M}\quad
  &\sum_{s\in\hat{\mathcal{S}}} Q_s(\mu_s)\\[2pt]
\text{s.t. }&
  \mu_s=\mu_{s+1},\qquad s=1,\dots,|\hat{\mathcal{S}}|-1.
  \label{eqn:non-anticipativity}
\end{align}
\end{subequations}
Problems \eqref{eqn:clt_ndpb} and \eqref{eqn:clt_lift_ndpb} are equivalent,
and retain all cannot‑link information through the viable‑cluster sets
$\{\mathcal{K}_s\}_{s\in\hat{\mathcal{S}}}$.

\section{Proof of Theorems}
\label{sec:proofs}
\subsection{Proof of Lemma~\ref{lem:cost}}
\begin{proof}
    \begin{align}
        \mathsf{sse}_{\mathcal{C}}(\mu) &= \sum_{i=1}^p ||x_i-\mu||^2 \\
        &= \sum_{i=1}^t ||x_i-\mu||^2 + \sum_{i=t+1}^p ||x_i-\mu||^2 \\
        &= \sum_{i=1}^t ||x_i||^2 -2\mu^T\sum_{i=1}^t x_i + t||\mu||^2 + \sum_{i=t+1}^p ||x_i-\mu||^2
    \end{align}
    Here $\sum_{i=1}^t ||x_i||^2$ can be rewritten as follow:
    \begin{align}
        \sum_{i=1}^t ||x_i||^2 &= \sum_{i=1}^t ||x_i-\mu_{ml}+\mu_{ml}||^2 \\
        &= \sum_{i=1}^t ||x_i-\mu_{ml}||^2-2\mu_{ml}^T\sum_{i=1}^t(x_i-\mu_{ml})+t||\mu_{ml}||^2 \\
        &= \sum_{i=1}^t ||x_i-\mu_{ml}||^2+t||\mu_{ml}||^2 \\
        &= (t-1)tr(\Sigma^2)+t||\mu_{ml}||^2
    \end{align}
    Thus, we have:
    \begin{align}
         \mathsf{sse}_{\mathcal{C}}(\mu)&=\sum_{i=1}^t ||x_i||^2 -2\mu^T\sum_{i=1}^t x_i + t||\mu||^2 + \sum_{i=t+1}^p ||x_i-\mu||^2 \\
         &= (t-1)tr(\Sigma_{ml}^2)+t||\mu_{ml}||^2-2t\mu^T\mu_{ml} + t||\mu||^2 + \sum_{i=t+1}^p ||x_i-\mu||^2\\
         &=(t-1)tr(\Sigma_{ml}^2) + t||\mu_{ml}-\mu||^2 + \sum_{i=t+1}^p ||x_i-\mu||^2 \\
         &=(t-1)tr(\Sigma_{ml}^2) + \mathsf{sse}_{\hat{\mathcal{C}}}(\mu)
    \end{align}
\end{proof}

\subsection{Proof of Theorem~\ref{thm:opt-una}}

\begin{proof}

($\Rightarrow$) Let $\mu^*$ denote a globally optimal solution of Problem~\eqref{eqn:overall_unc} as a result of ML collapse. Each pseudo-sample then corresponds to exactly one must-link component of the original dataset. In constructing Problem~\eqref{eqn:overall_unc}, assign every genuine sample in that component to the cluster of its associated pseudo-sample; any sample not belonging to a must-link component retains the label it receives in the unconstrained solution. This enforces $b_{i,k}=b_{i',k}$ for all $(i,i')\in\mathcal T_{ml}$, so the pair $(\mu^*,b^*)$ is feasible for Problem~\eqref{eqn:overall}. At the same time, the resulting objective matches that of Problem~\eqref{eqn:overall_unc}, since the additive term $\sum_{k\in\mathcal K_{ml}}(t_k-1)\operatorname{tr}(\Sigma_{ml,k}^2)$ precisely reinstates the variance eliminated when collapsing each must-link component. Hence, the optimal solution of Problem~\eqref{eqn:overall_unc} is optimal for Problem~\eqref{eqn:overall}.

($\Leftarrow$) Let ($\mu^*, b^*$) be a global optimum for Problem~\eqref{eqn:overall} on the original instance. Collapse ML constraints based on Lemma \ref{fig:pseudo-example} to form the pseudo-sample instance $\hat\mu$. Then $\hat\mu$ is feasible for the unconstrained Problem~\eqref{eqn:overall_unc} and so is the pair ($\hat\mu, \hat b$). 
By contradiction, assume that $\hat\mu$ is not optimal for Problem~\eqref{eqn:overall_unc}. Then there exists a feasible $\mu^\dagger$ for \eqref{eqn:overall_unc} with
$sse_{\mathcal{C}}(\mu^\dagger) < sse_{\mathcal{C}}(\hat\mu)$.
Via reconstructing the ML constrained Problem~\eqref{eqn:overall}, we obtain
\(
sse_{\hat{\mathcal{C}}}(\mu^\dagger,b^\dagger)
= sse_{\mathcal{C}}(\mu^\dagger) + C
< sse_{\mathcal{C}}(\hat\mu) + C
= sse_{\hat{\mathcal{C}}}(\mu^*, b^*),
\)
which contradicts the optimality of $(\mu^*, b^*)$. Hence, the optimal solution for Problem~\eqref{eqn:overall} is optimal for Problem~\eqref{eqn:overall_unc}.
\end{proof}

\section{Convergence Analysis}\label{sec:convergence}

In this section we establish the convergence of the proposed BB scheme, constructed with the grouping--based Lagrangian decomposition lower bound and the decompsable upper bound based on closed form solutions for must-link or K-coloring. A key feature of our algorithm is that it \textbf{branches exclusively in the space of first‑stage variables $\mu$ to guarantee convergence}. As all must-link components have been collapsed and every CL‑infeasible assignment eliminated, the remaining problem is an \emph{unconstrained} optimization over~$\mu$ with continuous objective $Q(\mu)=\sum_{s\in\hat{\mathcal S}}Q_s(\mu)$.


Therefore, the proof of convergence can easily adopt the foundational results of~\citep{cao_scalable_2019} and the seminal contributions in Chapter IV of~\citep{horst2013global}. Although the original pairwise–constrained MSSC places additional feasibility requirements on the assignment variables, our \emph{equivalent unconstrained} re‑formulation---obtained by first collapsing every must‑link component (Theorem~\ref{thm:opt-una}) and then discarding all assignments that violate cannot‑link constraints (See Lemma~\ref{lem:early-elim} and~\ref{lem:force-assignment})—allows \emph{any} point $\mu\in M$ to be treated as a feasible first‑stage decision. The proof of Theorem~\ref{thm:glb_cov} is thus becoming obvious with the definitions and theoretical frameworks of~\cite{cao_scalable_2019}, while only notational adaptations will be processed to reflect the reduced dataset $\hat{\mathcal S}$ and the viable‑cluster sets $\{\mathcal K_s\}_{s\in\hat{\mathcal S}}$ specific to the present problem.




\begin{lemma}[Lower Bounding Consistency]\label{lem:strcons}
    Given an exhaustive subdivision (See Definition IV.10~\citep{horst2013global}) on $\mu$, the lower‑bounding operation in Algorithm~\ref{alg:bb_sche} is strongly consistent (See Definition IV.7~\citep{horst2013global}).
\end{lemma}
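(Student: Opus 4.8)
\emph{Proof plan.} The plan is to reduce the statement to a continuity argument on a shrinking nested sequence of boxes, using the closed–form value of the second‑stage problem and the trivial $\lambda=0$ Lagrangian relaxation as a certificate, so that no explicit control over the iterative dual solve is required. Recall that after collapsing every must‑link component (Theorem~\ref{thm:opt-una}) and discarding all cannot‑link‑infeasible assignments (Lemmas~\ref{lem:early-elim}–\ref{lem:force-assignment}), the node problem reduces to the \emph{unconstrained} program $z(M)=\min_{\mu\in M}Q(\mu)$ with $Q(\mu)=\sum_{s\in\hat{\mathcal S}}Q_s(\mu)$ and $Q_s(\mu)=\min_{k\in\mathcal K_s}\|x_s-\mu_k\|_2^{2}$. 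Hence each $Q_s$, each group objective $Q_g=\sum_{s\in\hat{\mathcal S}_g}Q_s$, and therefore $Q$ itself, are continuous on the compact box $M_0$, being finite minima of continuous functions, and the viable‑cluster sets $\{\mathcal K_s\}$ are frozen at the root node and inherited unchanged by every subbox, so this closed form stays valid under any branching.

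First I would fix an arbitrary infinite nested sequence $M_0\supseteq M_1\supseteq M_2\supseteq\cdots$ of successively refined partition elements generated by Algorithm~\ref{alg:bb_sche}. Since the subdivision is exhaustive, $\delta(M_q)\to 0$, so $\bigcap_{q}M_q=\{\mu^\infty\}$ for a single point $\mu^\infty$. Because $\mu^\infty\in M_q$ for every $q$, we immediately obtain the upper half of a sandwich with no further work: $\beta(M_q)\le z(M_q)=\min_{\mu\in M_q}Q(\mu)\le Q(\mu^\infty)$.

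Next I would lower‑bound $\beta$ independently of how the dual is solved. By definition $\beta(M_q)=\beta^{SG+LD}(M_q)=\max_{\lambda}\beta^{SG+LD}(M_q,\lambda)\ge\beta^{SG+LD}(M_q,0)$, and at $\lambda=0$ the coupling constraints $\mu_g=\mu_{g+1}$ vanish and the relaxation separates over groups, so $\beta^{SG+LD}(M_q,0)=\sum_{g\in\mathcal G}\min_{\mu_g\in M_q}Q_g(\mu_g)$. For each $g$ the sequence $\min_{\mu_g\in M_q}Q_g(\mu_g)$ is nondecreasing in $q$ and bounded above by $Q_g(\mu^\infty)$; and for any $\varepsilon>0$, continuity of $Q_g$ at $\mu^\infty$ gives a ball $B$ around $\mu^\infty$ on which $Q_g>Q_g(\mu^\infty)-\varepsilon$, while $\delta(M_q)\to 0$ with $\mu^\infty\in M_q$ forces $M_q\subseteq B$ for all large $q$, whence $\min_{\mu_g\in M_q}Q_g(\mu_g)>Q_g(\mu^\infty)-\varepsilon$. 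Thus $\beta^{SG+LD}(M_q,0)\to\sum_{g}Q_g(\mu^\infty)=Q(\mu^\infty)$, and combining with the upper half yields $\beta(M_q)\to Q(\mu^\infty)=\min\{Q(\mu):\mu\in\bigcap_q M_q\}$. Since the reduced feasible set for $\mu$ is all of $M_0$ and any box with nonempty relative interior can be bisected, every undeleted partition element admits further refinement; together with the limit just established, this is exactly strong consistency in the sense of Definition IV.7 of~\citep{horst2013global}. (This lemma, paired with an exhaustive subdivision and the bound‑improving node selection of Algorithm~\ref{alg:bb_sche}, then gives Theorem~\ref{thm:glb_cov} through the classical convergence theorem of~\citep{horst2013global,cao_scalable_2019}.)

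The step I expect to be the main obstacle is dealing with the grouped Lagrangian dual itself: because $\beta^{SG+LD}(M_q)$ is produced by an iterative sequence of MISOCPs rather than in closed form, one cannot directly analyze its limit. The idea that removes this obstacle is that one need not: the value at $\lambda=0$ is simultaneously a valid lower bound on $\beta^{SG+LD}(M_q)$ (by the outer maximization over $\lambda$) and a quantity that provably converges to $Q(\mu^\infty)$, so the sandwich closes regardless of how—or how accurately—the dual is optimized in practice (the implementation only needs $\lambda=0$ to be among its trial multipliers, or equivalently the dual value to dominate the $\lambda=0$ relaxation). A secondary point to verify carefully is the continuity of $Q_s$ invoked above, which rests on the closed‑form identity $Q_s(\mu)=\min_{k\in\mathcal K_s}\|x_s-\mu_k\|_2^{2}$ of Appendix~\ref{sec:tssp} remaining in force on every subbox; this is immediate since $\{\mathcal K_s\}$ is fixed at the root and the second‑stage reduction is unaffected by further branching.
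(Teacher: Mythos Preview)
Your proof is correct and follows essentially the same approach as the paper: both exploit that exhaustive subdivision shrinks the box to a singleton $\{\bar\mu\}$, then sandwich $\beta(M_q)$ between the fully separable relaxation $\sum\min_{\mu\in M_q}Q_{(\cdot)}(\mu)$ below and $z(M_q)\le Q(\bar\mu)$ above, with continuity of the closed‑form $Q_s(\mu)=\min_{k\in\mathcal K_s}\|x_s-\mu_k\|_2^2$ closing the gap. The paper carries this out at the per‑sample level (defining $\tilde\mu_{i_q,s}\in\arg\min_{\mu\in M_{i_q}}Q_s(\mu)$ and summing $\beta_s(M_{i_q})$), whereas you work at the per‑group level and make the $\lambda=0$ certificate explicit—this is a slightly more careful treatment of the grouped Lagrangian actually used in Section~\ref{sec:ld-sg}, but the underlying argument is identical.
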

\begin{proof}
With an exhaustive subdivision, each box $M_{i_q}$ shrinks to a single point $\bar\mu$, so $\bar M=\{\bar\mu\}$.
We prove that
\(
\lim_{q\to\infty}\beta(M_{i_q})
    =z(\bar M)=\sum_{s\in\hat{\mathcal S}}Q_s(\bar\mu).
\)
Define, for every sample $s$,
$
  \tilde\mu_{i_q,s}\in
    \arg\min_{\mu\in M_{i_q}}
      \min_{k\in\mathcal K_s}\|x_s-\mu_k\|_2^{2},
$
where $\mathcal K_s$ is the set of clusters still admissible for $s$
after the cannot‑link pruning. \textbf{Because each \(\mathcal K_s\) already excludes every cannot‑link pairing, every distance minimized in the definition of \(\tilde\mu_{i_q,s}\) automatically respects all CL constraints.} Since $M_{i_q}\to\{\bar\mu\}$, we have
$\tilde\mu_{i_q,s}\to\bar\mu$.
Using the continuity of $Q_s(\cdot)$,
it follows that
$Q_s(\bar\mu)=\lim_{q\to\infty}Q_s(\tilde\mu_{i_q,s})
            =\lim_{q\to\infty}\beta_s(M_{i_q})$.
Summing over all $s$ yields
\(
\lim_{q\to\infty}\beta(M_{i_q})
   =\sum_{s\in\hat{\mathcal S}}Q_s(\bar\mu).
\)
Proof complete.
\end{proof}





\begin{lemma}[Lower Bounding Convergence]\label{lem:lbtoz_clt}
    Given an exhaustive subdivision (Definition IV.10~\citep{horst2013global}) on $\mu$, Algorithm~\ref{alg:bb_sche} satisfies $\lim \limits_{i\rightarrow\infty}\beta_i = z$.
\end{lemma}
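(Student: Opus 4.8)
The plan is to verify the three hypotheses of the classical branch-and-bound convergence theorem (Theorem~IV.3 in~\citep{horst2013global}, in the form adapted by~\citep{cao_scalable_2019}) and then invoke it. The ingredients required are: (i) the subdivision process on $\mu$ is exhaustive, which is assumed; (ii) the node-selection rule is bound improving, i.e.\ at iteration $i$ the algorithm removes a box $M$ attaining the current minimum lower bound $\beta(M)=\beta_i$ — this is precisely the ``Node Selection'' step of Algorithm~\ref{alg:bb_sche}; and (iii) the lower-bounding operation is strongly consistent, which is exactly Lemma~\ref{lem:strcons}. With these in hand the theorem yields $\lim_{i\to\infty}\beta_i=z$ directly, but I would also spell out the short contradiction argument for completeness.

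First I would record the elementary monotonicity facts. Since every child box is contained in its parent and $\beta(\cdot)$ is a valid lower bound ($\beta(M)\le z(M)$ for each box $M$, by the relaxation in Section~\ref{sec:ld-sg}), the quantity $\beta_i=\min\{\beta(M'):M'\in\mathbb{M}\}$ is nondecreasing; moreover the partition boxes in $\mathbb{M}$ always cover the part of $M_0$ that can still contain an optimal centroid, so $\beta_i\le z$ for all $i$. Hence $\beta_\infty:=\lim_{i\to\infty}\beta_i$ exists and $\beta_\infty\le z$. The pruning step (discarding $M'$ with $\beta(M')\ge\alpha_i\ge z$) only removes boxes that cannot improve on $z$, so it leaves this limit untouched; and if the algorithm terminates finitely the claim is immediate, so I would assume it produces an infinite sequence of nodes.

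Next I would extract a nested subsequence of selected boxes. By the bound-improving rule, infinitely many selected sets $M_{i_q}$ satisfy $\beta(M_{i_q})=\beta_{i_q}$, and by the standard ``filter'' construction of Chapter~IV of~\citep{horst2013global} these can be chosen decreasing, $M_{i_{q+1}}\subseteq M_{i_q}$. Exhaustiveness forces $\delta(M_{i_q})\to 0$, so $\bigcap_q M_{i_q}=\{\bar\mu\}$ for some $\bar\mu\in M_0$. Strong consistency (Lemma~\ref{lem:strcons}) then gives $\beta(M_{i_q})\to z(\{\bar\mu\})=\sum_{s\in\hat{\mathcal S}}Q_s(\bar\mu)$. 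On the other hand $\beta(M_{i_q})=\beta_{i_q}\to\beta_\infty$, so $\beta_\infty=\sum_{s\in\hat{\mathcal S}}Q_s(\bar\mu)\ge\min_{\mu\in M_0}\sum_{s\in\hat{\mathcal S}}Q_s(\mu)=z$; combined with $\beta_\infty\le z$ this yields $\beta_\infty=z$.

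The main obstacle is the middle step: justifying that the bound-improving selection genuinely produces a \emph{decreasing} (nested) subsequence of boxes whose diameters vanish, rather than merely an infinite collection of unrelated small boxes. This is the technical heart of the proof and is exactly where the interplay of the bound-improving rule with exhaustiveness is used. I would handle it by citing the subsequence/filter lemma from Chapter~IV of~\citep{horst2013global} (as done in~\citep{cao_scalable_2019}) rather than re-deriving it, after noting that pruned boxes have lower bound $\ge z$ and hence never attain $\beta_i<z$, so they can be ignored in the extraction.
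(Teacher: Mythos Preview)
Your proposal is correct and takes essentially the same approach as the paper: both reduce the claim to Theorem~IV.3 of~\citep{horst2013global} after checking that the node-selection rule is bound improving and that the lower bound is strongly consistent (Lemma~\ref{lem:strcons}). The paper simply cites these two ingredients in one line, whereas you additionally unpack the filter/subsequence argument; this extra detail is sound but not required.
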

\begin{proof}
This result can be obtained from Lemma~\ref{lem:strcons} and Theorem IV.3 of~\citep{horst2013global}.
\end{proof}
\begin{lemma}[Upper Bounding Convergence]\label{lem:ubtoz}
Given an exhaustive subdivision (Definition IV.10~\citep{horst2013global}) on the centroid space $\mu$, Algorithm~\ref{alg:bb_sche} produces a sequence $\{\alpha_i\}$ that satisfies $\lim_{i\to\infty}\alpha_i = z$.
\end{lemma}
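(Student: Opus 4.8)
The plan is to prove $\lim_i\alpha_i=z$ by the classical bounding‑consistency and bound‑improving‑selection argument of Chapter IV in \citep{horst2013global}, as specialized to the reduced‑space scheme in \citep{cao_scalable_2019}, using the lower‑bounding results (Lemma~\ref{lem:strcons}, Lemma~\ref{lem:lbtoz_clt}) as already‑proved ingredients. First I would record that the incumbent is well behaved: the update $\alpha_i\leftarrow\min\{\alpha_{i-1},\alpha(M_1),\alpha(M_2)\}$ makes $\{\alpha_i\}$ non‑increasing, and every node upper bound $\alpha(M)$ is the cost of an assignment that is feasible for the reduced problem~\eqref{eqn:clt_ndpb}, so $\alpha_i\ge z$; hence $\alpha_i\downarrow\bar\alpha$ for some $\bar\alpha\ge z$, and it only remains to show $\bar\alpha\le z$. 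If the algorithm stops after finitely many iterations this is immediate, so assume it does not. Bound‑improving selection then yields a nested sequence of partition sets $M_{i_1}\supset M_{i_2}\supset\cdots$ with $\beta(M_{i_q})=\beta_{i_q}$, and Lemma~\ref{lem:lbtoz_clt} gives $\beta(M_{i_q})\to z$; exhaustiveness of the subdivision forces $\bigcap_q M_{i_q}=\{\bar\mu\}$, and strong consistency of the lower bound (Lemma~\ref{lem:strcons}) gives $\sum_{s\in\hat{\mathcal S}}Q_s(\bar\mu)=z(\{\bar\mu\})=\lim_q\beta(M_{i_q})=z$, so $\bar\mu$ is a global minimizer.

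The remaining and decisive step is to show that the upper bound computed at these surviving nodes converges to $z$, i.e.\ $\alpha(M_{i_q})\to z$: since $\alpha(M_{i_q})$ is folded into the incumbent at the iteration where $M_{i_q}$ is created and $\{\alpha_i\}$ is non‑increasing, we get $\bar\alpha\le\alpha(M_{i_q})$ for every $q$, hence $\bar\alpha\le\lim_q\alpha(M_{i_q})=z$, closing the argument. For the must‑link‑only regime this is routine: the bound is $\alpha(M_{i_q})=\sum_{s\in\hat{\mathcal S}}Q_s(\hat\mu_{i_q})$ for some feasible $\hat\mu_{i_q}\in M_{i_q}$, and since $\hat\mu_{i_q}\to\bar\mu$ and each $Q_s$ is continuous, $\alpha(M_{i_q})\to\sum_{s}Q_s(\bar\mu)=z$. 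For the cannot‑link regime with the $K$‑coloring bound, the candidate pool of $M_{i_q}$ contains centroids extracted from the grouped Lagrangian relaxation of Section~\ref{sec:ld-sg}, which converge to $\bar\mu$ as $M_{i_q}\downarrow\{\bar\mu\}$; for a candidate $\mu^{(c_q)}$ close enough to $\bar\mu$ the greedy nearest‑centroid labeling $\chi^{(c_q)}$ agrees with the optimal labeling at $\bar\mu$, which is $\mathcal T_{\mathrm{cl}}$‑proper (it is realized by the reduced, CL‑pruned problem via Lemma~\ref{lem:early-elim}), so $z^{(c_q)}_{\mathrm{ub}}<\infty$ and $z^{(c_q)}_{\mathrm{ub}}\to\sum_s Q_s(\bar\mu)=z$; therefore $\alpha(M_{i_q})=\min_c z^{(c)}_{\mathrm{ub}}\to z$ as well.

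The main obstacle is precisely the cannot‑link case: one must rule out the $K$‑coloring bound staying at $+\infty$ along the surviving branch, which requires two things to be made rigorous — that the optimal centroid $\bar\mu$ induces a CL‑proper nearest‑centroid labeling, and that the candidate‑extraction step actually feeds centroids arbitrarily close to $\bar\mu$ into the pool infinitely often along that branch (a consequence of the convergence of the node relaxation as the box collapses to a point). Handling ties in the nearest‑centroid assignment near $\bar\mu$ — so that $\chi^{(c)}$ is locally constant and equals the optimal labeling — may need a mild perturbation or general‑position remark; apart from that, the whole argument is a notational transcription of the consistency theory of \citep{horst2013global} and \citep{cao_scalable_2019} to the reduced dataset $\hat{\mathcal S}$ and viable‑cluster sets $\{\mathcal K_s\}$.
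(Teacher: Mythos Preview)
Your proposal is correct and follows the same route as the paper --- both invoke the consistency machinery of \citep{horst2013global} as specialized in \citep{cao_scalable_2019} --- and your detailed argument (monotone incumbent bounded below by $z$, nested surviving sequence collapsing to an optimizer $\bar\mu$, continuity of $Q_s$ forcing $\alpha(M_{i_q})\to z$) is precisely what the cited external lemma establishes. The paper's own proof is in fact a single sentence: it notes that after must-link collapse and cannot-link pruning the problem is an equivalent \emph{unconstrained} MSSC over $\mu$ and then invokes Lemma~6 of \citep{cao_scalable_2019} verbatim, so your careful treatment of the $K$-coloring upper bound and the CL-properness obstacle you flag actually goes beyond what the paper itself writes down.
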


\begin{proof}
Let $\mu^*\in M_0$ be an optimal centroid set for the \emph{equivalent unconstrained} MSSC obtained after \textbf{collapsing must‑link components and discarding every cannot‑link–infeasible assignment}. According to Lemma 6 in~\cite{cao_scalable_2019}, $\lim_{i\to\infty}\alpha_i=z$ holds when executing Algorithm~\ref{alg:bb_sche}. 





\end{proof}


Combing Lemma~\ref{lem:lbtoz_clt} and~\ref{lem:ubtoz}, we obtain Theorem~\ref{thm:glb_cov}. Essentially, pseudo-samples in Section \ref{sec:ml-collapse} yield an equivalent unconstrained MSSC problem, and Theorem \ref{thm:opt-una} proves a bijection between optimal solutions before and after this transformation. The geometric rules in Section \ref{sec:safeassign} apply dominance checks via Lemmas \ref{lem:early-elim} and \ref{lem:force-assignment}, excluding only assignments whose cost significantly exceeds the incumbent upper bound without removing any centroid regions. Consequently, our branch-and-bound algorithm still exhaustively subdivides the centroid space, satisfying Lemmas~\ref{lem:strcons} through~\ref{lem:ubtoz}. Thus, the result remains valid for the entire SDC-GBB pipeline. Empirically, we show that within a fixed 12-hour runtime limit, SDC-GBB achieves optimality gaps above 0.1\% for some datasets under certain constraints. However, this does not reflect the failure of convergence in our global optimization scheme, but rather the best feasible solution and its lower bound at timeout. Imposing a time limit is the standard which we follow to ensure a fair comparison with other exact baselines.


\section{Effect of Pairwise Constraints}
\label{sec:add_experiments}

Background knowledge in constrained clustering is introduced through pairwise constraints: must-link (ML) and cannot-link (CL). This section analyzes how varying densities of these constraints affect the branch-and-bound performance on medium-sized problem instances. Table~\ref{tab:comparison} summarizes the number of nodes processed and the average computational time per node under different constraint densities. All experiments reported in Table ~\ref{tab:comparison} achieve a final optimality gap below or equal to $0.1\%$.

Must-link constraints merge linked samples into single pseudo-points before initiating the branch-and-bound procedure. Conceptually, this operation is analogous to samples merging immediately upon defining constraints, effectively knowing in advance that they must converge into a single optimal position. Figure~\ref{fig:pseudo-example} illustrates this merging process: each must-link pair collapses into one pseudo-point, thus reducing the number of samples to consider. Although this merging simplifies the optimization search space by reducing dimensionality, it simultaneously imposes additional equality constraints in each node relaxation within the branch-and-bound process, thereby increasing the computational effort per node relaxation. Nonetheless, overall node processing becomes faster since fewer distinct samples remain active, which accelerates bound computations without compromising the equivalence and optimality of the final solution. Empirically, the average time per node decreases with an increasing number of must-link constraints (from 74 seconds per node down to 13 seconds per node for dataset \textit{Syn-2100}). However, there is a threshold for the density of constraints necessary to significantly simplify the branching process: below approximately $n/64$ must-link pairs, constraints have minimal practical relevance, causing the equivalent problem formulation, described in Section~\ref{sec:ml-collapse}, to behave similarly to its unconstrained counterpart.

In contrast, the geometric sample‐determination strategy for cannot‐link constraints functions as barriers that restrict feasible assignments, similar to placing walls within an axis‐aligned region. Unlike must‐link constraints, which collapse samples proactively, cannot‐link constraints compel each sample’s assignment to navigate around imposed boundaries that are not initially evident. Each sample seeks to reach its optimal cluster centroid but must repeatedly avoid these geometric barriers, reflecting constrained assignments and generating additional branching iterations. Despite this growth in node count, the computational time per node remains stable because infeasible assignments are pruned at an early stage, as stated in Lemma~\ref{lem:early-elim}. Thus, the complexity introduced by cannot‐link constraints primarily affects the extent of branching rather than the computational complexity of each bound evaluation. In summary, must‐link constraints simplify the optimization upfront by reducing per-node complexity through component collapse, whereas cannot‐link constraints expand the search tree but maintain per-node computational cost, ensuring that solutions satisfy the imposed constraints without fundamentally altering the underlying clustering structure.

\begin{figure}[htbp]
  \centering
  \begin{tikzpicture}
    \begin{axis}[
      name=plot1,
      width=0.4\textwidth,
      height=0.3\textwidth,
      xlabel={Constraint Percentage},
      ylabel={Nodes Processed},
      grid=both,
      xtick=data,
      symbolic x coords={n/2,n/4,n/8,n/16,n/32,n/64},
      x tick label style={rotate=25,anchor=east,yshift=-0.4em},
      ymin=0, ymax=60,
      legend to name=commonlegend,
      legend columns=2,
      legend cell align={left},
    ]
      \addplot[blue,  mark=o,       thick]
        coordinates { (n/2,50)  (n/4,41)  (n/8,32)
                      (n/16,39) (n/32,10) (n/64,37) };
      \addlegendentry{Syn-1,200 ML}

      \addplot[red,   mark=square,  thick]
        coordinates { (n/2,25)  (n/4,51)  (n/8,23)
                      (n/16,8)  (n/32,31) (n/64,21) };
      \addlegendentry{Syn-2,100 ML}
    \end{axis}

    \begin{axis}[
      at={(plot1.outer east)},
      anchor=outer west,
      xshift=1cm,
      width=0.4\textwidth,
      height=0.3\textwidth,
      xlabel={Constraint Percentage},
      ylabel={Time per Node (s)},
      grid=both,
      xtick=data,
      symbolic x coords={n/2,n/4,n/8,n/16,n/32,n/64},
      x tick label style={rotate=25,anchor=east,yshift=-0.4em},
      ymin=0, ymax=80,
      legend style={draw=none}
    ]
      \addplot[blue,  mark=o,       thick]
        coordinates { (n/2,7.79)  (n/4,14.95) (n/8,15.66)
                      (n/16,14.47)(n/32,27.12)(n/64,17.09) };
      \addplot[red,   mark=square,  thick]
        coordinates { (n/2,13.69) (n/4,16.48) (n/8,34.49)
                      (n/16,76.04)(n/32,44.38)(n/64,74.81) };
    \end{axis}
  \end{tikzpicture}

  \pgfplotslegendfromname{commonlegend}

  \caption{Effect of Must–Link Constraints on (a) the number of nodes processed and (b) time per node, for both synthetic datasets.}
  \label{fig:combined_ml}
\end{figure}
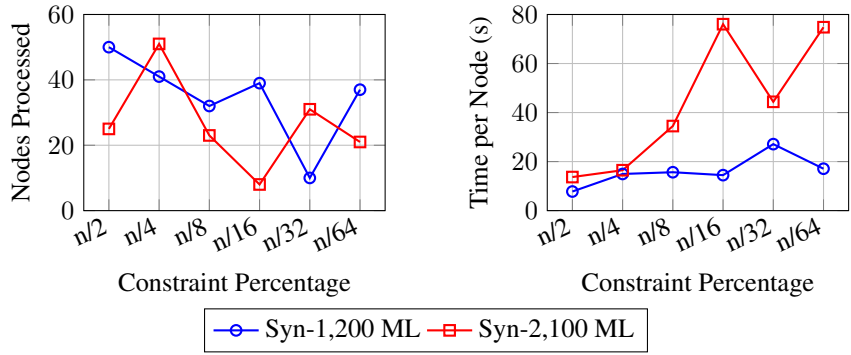

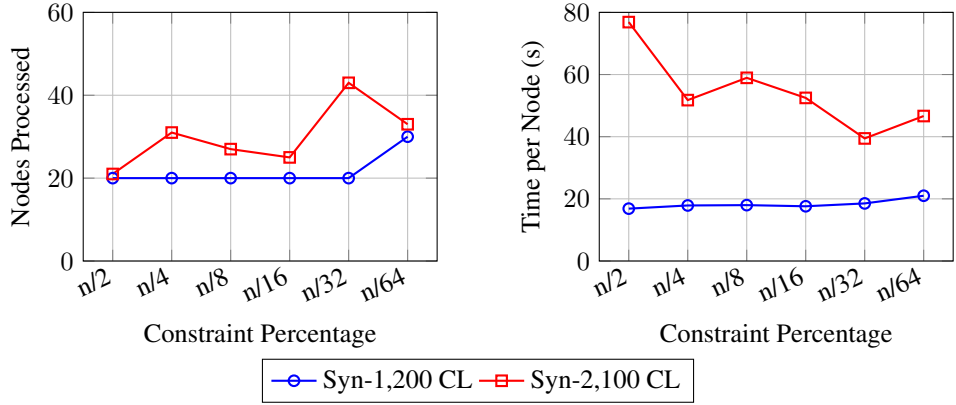
\begin{figure}[htbp]
  \centering
  \begin{tikzpicture}
    \begin{axis}[
      name=plot1,
      width=0.45\textwidth,
      height=0.35\textwidth,
      xlabel={Constraint Percentage},
      ylabel={Nodes Processed},
      grid=both,
      xtick=data,
      symbolic x coords={n/2,n/4,n/8,n/16,n/32,n/64},
      x tick label style={rotate=25,anchor=east,yshift=-0.4em},
      ymin=0, ymax=60,
      legend to name=commonlegend,
      legend columns=2,
      legend cell align={left},
    ]
      \addplot[blue,  mark=o,       thick]
        coordinates { (n/2,20) (n/4,20) (n/8,20) (n/16,20) (n/32,20) (n/64,30) };
      \addlegendentry{Syn-1,200 CL}

      \addplot[red,   mark=square,  thick]
        coordinates { (n/2,21) (n/4,31) (n/8,27) (n/16,25) (n/32,43) (n/64,33) };
      \addlegendentry{Syn-2,100 CL}
    \end{axis}

    \begin{axis}[
      at={(plot1.outer east)},
      anchor=outer west,
      xshift=1cm,
      width=0.45\textwidth,
      height=0.35\textwidth,
      xlabel={Constraint Percentage},
      ylabel={Time per Node (s)},
      grid=both,
      xtick=data,
      symbolic x coords={n/2,n/4,n/8,n/16,n/32,n/64},
      x tick label style={rotate=25,anchor=east,yshift=-0.4em},
      ymin=0, ymax=80,
      legend style={draw=none}
    ]
      \addplot[blue,  mark=o,       thick]
        coordinates { (n/2,16.85) (n/4,17.87) (n/8,17.98) (n/16,17.61) (n/32,18.53) (n/64,21.00) };
      \addplot[red,   mark=square,  thick]
        coordinates { (n/2,76.87) (n/4,51.77) (n/8,58.94) (n/16,52.48) (n/32,39.44) (n/64,46.65) };
    \end{axis}
  \end{tikzpicture}

  \pgfplotslegendfromname{commonlegend}

  \caption{Effect of Cannot–Link Constraints on (a) the number of nodes processed and (b) time per node, for both synthetic datasets.}
  \label{fig:combined_cl}
\end{figure}

Interestingly, we observe that the linear relaxation is substantially weakened when must-link constraints form large or overlapping superpoints. In URBANGB, the merging of 469 groups into $k = 3$ superpoints induces symmetry that leaves a 12.97\% gap, and in SPNET3D the near one-dimensional data arrangement produces overlapping superpoints that the bound cannot improve, resulting in a 7.43\% gap. On the other hand, in SKIN\_8 and SPNET3D\_5, the mixed ML+CL case is uniquely hard because ML contraction creates component-level “hotspots” that are densely entangled by cannot-links. Inside SKIN\_8, there is a massive ML super-component that must occupy one cluster and is CL-forbidden from sharing with thousands of neighbors, whereas the tight CL and loose ML constraints simultaneously oppose the geometry of SPNET3D\_5. This coupling weakens the relaxation and drives deep branching, yielding a large optimality gap > 3\% in the mixed constraint case, whereas ML-only or CL-only avoid this interaction and remain tight.

\begin{table*}[htbp]
    \caption{Runtime metrics with Different Constraint Settings with solution optimality gap < 0.1\%.}
    \label{tab:comparison}
    \begin{center}
        \begin{small}
            \scshape
            \begin{tabular}{l l c c c c c c}
                \toprule
                Dataset & Metric & \textbf{$\frac{n}{2}$} & \textbf{$\frac{n}{4}$} & \textbf{$\frac{n}{8}$} & \textbf{$\frac{n}{16}$} & \textbf{$\frac{n}{32}$} & \textbf{$\frac{n}{64}$} \\
                \midrule
                \multicolumn{8}{c}{\textbf{Must-Link (ML)}} \\
                \midrule
                \textit{Syn-1,200} & Nodes           & 50  & 41  & 32  & 39  & 10  & 37  \\
                                   & Gap (\%)        & <0.1\% & <0.1\% & <0.1\% & <0.1\% & <0.1\% & <0.1\% \\
                                   & Time (s)        & 389.61 & 613.09 & 501.05 & 564.41 & 271.24 & 632.25 \\
                                   & Time/node (s)   & 7.79   & 14.95  & 15.66  & 14.47  & 27.12  & 17.09  \\
                                   & Core hour (h)  & 7.79   & 14.95  & 15.66  & 14.47  & 27.12  & 17.09  \\
                \midrule
                \textit{Syn-2,100} & Nodes           & 25  & 51  & 23  & 8   & 31  & 21  \\
                                   & Gap (\%)        & <0.1\% & <0.1\% & <0.1\% & <0.1\% & <0.1\% & <0.1\% \\
                                   & Time (s)        & 342.20 & 840.70 & 793.19 & 608.32 & 1375.86 & 1570.95 \\
                                   & Time/node (s)   & 13.69  & 16.48  & 34.49  & 76.04  & 44.38  & 74.81  \\
                                   & Core hour (h)  & 5.41   & 8.52  & 6.96  & 7.84  & 3.77  & 8.78  \\
                \midrule
                \multicolumn{8}{c}{\textbf{Cannot-Link (CL)}} \\
                \midrule
                \textit{Syn-1,200} & Nodes           & 20  & 20  & 20  & 20  & 20  & 30  \\
                                   & Gap (\%)        & <0.1\% & <0.1\% & <0.1\% & <0.1\% & <0.1\% & <0.1\% \\
                                   & Time (s)        & 336.98 & 357.45 & 359.61 & 352.23 & 370.58 & 629.90 \\
                                   & Time/node (s)   & 16.85  & 17.87  & 17.98  & 17.61  & 18.53  & 21.00  \\
                                   & Core hour (h)  & 7.79   & 14.95  & 15.66  & 14.47  & 27.12  & 17.09  \\
                \midrule
                \textit{Syn-2,100} & Nodes           & 21  & 31  & 27  & 25  & 43  & 33  \\
                                   & Gap (\%)        & <0.1\% & <0.1\% & <0.1\% & <0.1\% & <0.1\% & <0.1\% \\
                                   & Time (s)        & 1614.36 & 1604.81 & 1591.37 & 1312.08 & 1695.81 & 1539.55 \\
                                   & Time/node (s)   & 76.87  & 51.77  & 58.94  & 52.48  & 39.44  & 46.65  \\
                                   & Core hour (h)  & 7.79   & 14.95  & 15.66  & 14.47  & 27.12  & 17.09  \\
                \bottomrule
            \end{tabular}
        \end{small}
    \end{center}
    \vskip -0.1in
\end{table*}

To further highlight resource usage with a unified measure of parallelism and wall-clock time, we compute “core-hour”, defined as \(core\_hour = \frac{time (s) \times cores}{3600(s)}\). Under must-link constraints, SDC-GBB's overall branching behavior varies substantially with constraint density, as on Syn-1,200 and Syn-2,100, going from $n/2$ to $n/32$ ML constraints generally reduces the number of branch-and-bound nodes but raises the average time per node. In contrast, the number of nodes explored grows roughly linearly under uniform cannot-link constraint placement; as we observe in Table~\ref{tab:comparison}, node count scales \(\sim O(m)\) with \(m\) cannot‐link constraints, while time per node remains constant. As a result, the wall-clock time increases by about 20\% on Syn-1,200 and roughly a factor of two on Syn-2,100, but the core hour stays relatively static across constraint sizes. 

When connecting this empirical measure to asymptotic complexity, we note that each branch-and-bound node incurs $O(|\hat S|K)$ work, while exhaustive axis-aligned bisection yields at most $O((\delta/\varepsilon)^{Km})$ \citep{horst2013global}, where $\delta$ is the initial box size and $\varepsilon$ the target precision. Together these give a nominal worst-case cost of $O((\delta/\varepsilon)^{Km}|\hat S|K)$. On a cluster with $P$ identical CPU cores, parallelising across open nodes yields an expected core-hour consumption of $O\left(\frac{(\delta/\varepsilon)^{Km}|\hat S|K}{P}\right)$, provided the number of simultaneously available nodes exceeds $P$. The empirical trends observed in Tables 1 and 2 align with this theoretical envelope.

\subsection{Runtime Analysis of Exhaustive $\mu$ Search}
\label{sec:mu_runtime}

Table~\ref{tab:mu_runtime} reports wall‑clock statistics for the exhaustive
enumeration of $\mu$ values inside our reduced‑space branch–and–bound
framework.  For each synthetic dataset we list the total wall time
($T_{\text{total}}$), the time devoted exclusively to the $\mu$ search
($T_\mu$), the number of branch–and–bound nodes explored ($N_{\text{nodes}}$),
and the average time per node
($T_{\text{node}} = T_\mu / N_{\text{nodes}}$).  
\begin{table}[htbp]
  \caption{Runtime metrics for n/4 must-links exhaustive $\mu$ search (gap $\le 0.1\%$).}
  \label{tab:mu_runtime}
  \centering
  \begin{small}\scshape
  \begin{tabular}{lcccc}
    \toprule
    Dataset & $T_{\text{total}}$ (s) & $T_\mu$ (s) & $N_{\text{nodes}}$ &
    $T_{\text{node}}$ (s) \\
    \midrule
    Syn‑21000  & 3{,}323.69 & 3{,}311.58 & 69 & 47.99 \\
    Syn‑81000  & 12{,}921.19 & 12{,}804.11 & 71 & 180.34 \\
    Syn‑141000 & 18{,}499.32 & 18{,}307.11 & 55 & 332.86 \\
    Syn‑171000 & 28{,}796.98 & 28{,}551.09 & 43 & 664.00 \\
    \bottomrule
  \end{tabular}
  \end{small}
  \vspace{-3mm}
\end{table}

The results confirm that the $\mu$ enumeration dominates the computational
budget ($T_\mu / T_{\text{total}} > 0.99$), while other tasks, namely relaxations,
cuts and I/O are marginal.  Although $T_{\text{node}}$ grows faster than
linearly with the dataset size, parallel execution on 100 cores keeps the
overall wall‑time below ten hours even for the 171 k‑point instance.

\subsection{Clustering evaluation}

Our work distinguishes between two aspects of clustering performance: (i) the formulation, which defines how data similarity is measured and affects metrics such as ARI, NMI, and purity; and (ii) the quality of the solution (in terms of cost minimization), which we measure using the optimality gap. Our focus is on the quality of the solution for the K-Means cost, ensuring an optimality guarantee of the solution. Here, we performed additional statistical tests and gave the ARI, NMI and purity results of 5 datasets with ground truth labels and compare these with the algorithm of \citep{hua2021scalable} in Table \ref{tab:clustering_metrics}. In these experiments, the number of clusters is set to the number of ground truth labels.

\begin{table*}[htbp]
  \caption{Clustering evaluation metrics on solutions of datasets under different constraint settings.}
  \label{tab:clustering_metrics}
  \vskip 0.1in
  \centering
  \renewcommand\cellalign{lc}   
  \begin{small}
  \begin{sc}
  \setlength{\tabcolsep}{6pt}
  \resizebox{\textwidth}{!}{%
    \begin{tabular}{llccccc}
      \toprule
      \multirow[c]{1}{*}{Metrics} &
      \multirow[c]{1}{*}{Constraints} &
      \makecell[c]{Iris\\$k = 3$} &
      \makecell[c]{Seeds\\$k = 3$} &
      \makecell[c]{Hemi\\$k = 3$} &
      \makecell[c]{HTRU2\\$k = 2$} &
      \makecell[c]{Skin\_8\\$k = 2$} \\
      \midrule
      \multirow{4}{*}{ARI}
        & MSSC \citep{hua2021scalable}            & 0.7163 & 0.7166 & 0.0126 & $-0.0779$ & $-0.0387$ \\
        & SDC‑GBB (ML)    & 0.7859 & 0.7261 & 0.0137 & $-0.0385$ & $-0.0427$ \\
        & SDC‑GBB (CL)    & 0.7163 & 0.7166 & 0.0148 &  0.0389   & 0.3545\\
        & SDC‑GBB (ML+CL) & 0.7711 & 0.7384& 0.0171 &  0.0909   & $-0.0090$ \\
      \midrule
      \multirow{4}{*}{NMI}
        & MSSC \citep{hua2021scalable}           & 0.7419 & 0.6949 & 0.0335 & 0.0265 & 0.0221 \\
        & SDC‑GBB (ML)    & 0.7773 & 0.6979 & 0.0335 & 0.0666 & 0.0280 \\
        & SDC‑GBB (CL)    & 0.7419 & 0.6949 & 0.0309 & 0.1007 & 0.4388\\
        & SDC‑GBB (ML+CL) & 0.7705 & 0.7006& 0.0338 & 0.1275 & 0.0343 \\
      \midrule
      \multirow{4}{*}{Purity}
        & MSSC \citep{hua2021scalable}           & 0.8867 & 0.8952 & 0.4210 & 0.9084 & 0.7925 \\
        & SDC‑GBB (ML)    & 0.9200 & 0.9000 & 0.4251 & 0.9084 & 0.7925 \\
        & SDC‑GBB (CL)    & 0.8867 & 0.8952 & 0.4373 & 0.9084 &  0.9420\\
        & SDC‑GBB (ML+CL) & 0.9133 & 0.9048& 0.4419 & 0.9084 & 0.7925 \\
      \bottomrule
    \end{tabular}%
  }
  \end{sc}
  \end{small}
\end{table*}

\section{Heuristic Algorithms}
\label{sec:b_heuristic}
We evaluate the proposed procedure by comparing its clustering quality with four reference heuristics for the minimum-sum-of-squares clustering problem subject to must-link (ML) and cannot-link (CL) constraints.  
\textsc{COP}-$k$-means \citep{wagstaff_constrained_2001} restarts the classical Lloyd algorithm one hundred times and enforces the constraints at every assignment step. The post-processing encode-$k$-means-Post method of Nghiem \citep{nghiem_constrained_2020} formulates the reassignment of instances produced by an unconstrained or partially constrained clustering method as a binary combinatorial program that respects all ML and CL relations. The binary linear programming approach of Baumann \citep{baumann2020binary} (\textsc{BLPKM}-CC) solves to optimality the assignment subproblem within each Lloyd iteration; only the initial centroids are random, so the method is partially deterministic. Variants of coreset algorithm construct a ($k$, $\epsilon$)-coreset, on which $k$-means can be solved quickly while guaranteeing that the resulting centers incur at most a (1 + $\epsilon$) multiplicative error in squared-error cost on the full dataset, thus preserving near-optimality with far lower computational and memory demands. We test several coreset construction algorithms and obtain the UB for Sensitivity Sampling with \(k = 3\), oversample factor \(c = 2\), error \(\epsilon = 0.1\) and probability of approximation guarantee \(delta = 0.1\), as this is the state-of-the-art method for constructing coresets \citep{schwiegelshohn2022empirical} and the only method satisfying the 4-hour runtime limit for all datasets. 

Tables \ref{tab:additional_algos_ml}, \ref{tab:additional_algos_cl}, \ref{tab:additional_algos_both} report the optimal UB obtained by all heuristic algorithms with 100 independent initializations for ML-only, CL-only, and ML+CL experiments respectively. We do not include results for Sensitivity Sampling in experiments involving CL since under coreset algorithms, any hard cannot-link requirement collapses the additivity assumption and blows up point sensitivities, thus inflates the coreset to linear size. Besides, we apply N/A to some COP-$k$-means results, as this algorithm generally could not find the global optima for datasets of size $n$ > 2{,}000.

\begin{table*}[htbp]
    \caption{Heuristic algorithms on $\frac{n}{4}$ ML constraints}
    \label{tab:additional_algos_ml}
    \begin{center}
        \begin{small}
            \scshape
            \begin{tabular}{@{\hspace{3pt}}l
                            @{\hspace{12pt}}c
                            @{\hspace{12pt}}c
                            @{\hspace{12pt}}c
                            @{\hspace{12pt}}c
                            @{\hspace{12pt}}c@{\hspace{3pt}}}
                \toprule
                Datasets & Size & COP-$k$-means & \makecell{encode-\\$k$-means-post}& BLPKM-CC & \makecell{Sensitivity\\Sampling}\\
                \midrule
                Iris        & 150      & 150.78& 84.67& \textbf{83.82}& 93.87\\
                Seeds       & 200      & 713.88& 625.37& \textbf{620.78}& 761.60\\
                Hemi        & 1{,}955 & N/A& \textbf{\boldmath$1.602\times10^7$}& $1.875\times10^7$& $2.167\times10^7$\\
                PR2392      & 2{,}392 & N/A& \textbf{\boldmath$3.210\times10^{10}$}& $3.246\times10^{10}$& $3.436\times10^{10}$\\
                RDS\_CNT& 10{,}000& N/A&\textbf{\boldmath $6.122\times10^7$}& $6.579\times10^7$& $6.387\times10^7$\\
                \midrule
                HTRU2& 17{,}898& N/A
& \textbf{\boldmath$1.407\times10^8$}& $1.472\times10^8$& $1.505\times10^8$\\
                SPNET3D\_5& 50{,}000& N/A
& \textbf{\boldmath$6.627\times10^6$}& $7.089\times10^6$& $7.196\times10^6$\\
                Skin\_8& 80{,}000& N/A& $5.464\times10^8$& $5.492\times10^8$& \textbf{\boldmath$4.533\times10^8$}\\
                URBANGB& 360{,}177& N/A
& \multicolumn{2}{c}{Out of memory}& \textbf{\boldmath$1.643\times10^9$}\\
                SPNET3D& 434{,}874& 
N/A
& \textbf{\boldmath$5.848\times10^7$}& $6.264\times10^7$& $6.413\times10^7$\\
                \midrule
                Syn-42000& 42{,}000& N/A
& \textbf{\boldmath$5.127\times10^5$}& $1.584\times10^6$& $5.148\times10^5$\\
                Syn-210000& 210{,}000& 
N/A
&\textbf{\boldmath $2.163\times10^6$}&\textbf{\boldmath $2.163\times10^6$}& $2.178\times10^6$\\
                Syn-420000& 420{,}000& 
N/A
& \textbf{\boldmath$6.010\times10^6$}& \textbf{\boldmath$6.010\times10^6$}& $6.080\times10^6$\\
                Syn-1050000& 1{,}050{,}000& 

N/A
& $3.708\times10^7$& \textbf{\boldmath$1.050\times10^7$}& $1.052\times10^7$\\
                Syn-1500000& 1{,}500{,}000&

N/A
& $5.611\times10^7$& \textbf{\boldmath$1.725\times10^7$}& $1.731\times10^7$\\
                \bottomrule
            \end{tabular}
        \end{small}
    \end{center}
\end{table*}

\begin{table*}[htbp]
    \caption{Heuristic algorithms on $\frac{n}{4}$ CL constraints}
    \label{tab:additional_algos_cl}
    \begin{center}
        \begin{small}
            \scshape
            \begin{tabular}{@{\hspace{3pt}}l
                            @{\hspace{12pt}}c
                            @{\hspace{12pt}}c
                            @{\hspace{12pt}}c
                            @{\hspace{12pt}}c@{\hspace{3pt}}}
                \toprule
                Dataset & Size &
                COP-$k$-means &
                \makecell{encode-\\$k$-means-post} &
                BLPKM-CC \\
                \midrule
                Iris        & 150       & 119.37 &  \textbf{\boldmath80.31} &  80.71 \\
                Seeds       & 200       & 634.3  & 603.96 & \textbf{\boldmath603.04} \\
                Hemi        & 1{,}955   & $1.711\times10^{7}$ & \textbf{\boldmath$1.401\times10^{7}$} & $1.606\times10^{7}$ \\
                PR2392      & 2{,}392   & $2.596\times10^{10}$ & \textbf{\boldmath$2.566\times10^{10}$} & $2.578\times10^{10}$ \\
                RDS\_CNT    & 10{,}000  & $3.696\times10^{7}$ & \textbf{\boldmath$2.897\times10^{7}$} & $2.902\times10^{7}$ \\
                \midrule
                HTRU2       & 17{,}898  & N/A
& $1.928\times10^{8}$ & \textbf{\boldmath$1.740\times10^{8}$} \\
                SPNET3D\_5  & 50{,}000  & N/A
& \textbf{\boldmath$3.938\times10^{6}$} & $4.027\times10^{6}$ \\
                Skin\_8    & 80{,}000  & N/A
& \textbf{\boldmath$6.367\times10^{8}$} & $6.464\times10^{8}$ \\
                \midrule
                Syn-12000   & 12{,}000  & N/A
& \textbf{\boldmath$9.503\times10^{4}$} & \textbf{\boldmath$9.503\times10^{4}$} \\
                Syn-21000   & 21{,}000  & 
N/A
& $1.928\times10^{8}$ & \textbf{\boldmath$1.740\times10^{8}$} \\
                Syn-42000   & 42{,}000  & N/A
& \textbf{\boldmath$1.817\times10^{5}$} & \textbf{\boldmath$1.817\times10^{5}$} \\
                Syn-210000  & 210{,}000 & 
N/A
& \textbf{\boldmath$2.161\times10^{6}$} & \textbf{\boldmath$2.161\times10^{6}$} \\
                \bottomrule
            \end{tabular}
        \end{small}
    \end{center}
\end{table*}

\begin{table*}[htbp]
    \caption{Heuristic algorithms on $\frac{n}{4}$ ML + $\frac{n}{4}$ CL constraints}
    \label{tab:additional_algos_both}
    \begin{center}
        \begin{small}
            \scshape
            \begin{tabular}{@{\hspace{3pt}}l
                            @{\hspace{12pt}}c
                            @{\hspace{12pt}}c
                            @{\hspace{12pt}}c
                            @{\hspace{12pt}}c@{\hspace{3pt}}}
                \toprule
                Dataset & Size &
                COP-$k$-means &
                \makecell{encode-\\$k$-means-post} &
                BLPKM-CC \\
                \midrule
                Iris        & 150       & N/A & 88.75& \textbf{\boldmath86.85}\\
                Seeds       & 200       & N/A & 601.36& \textbf{\boldmath597.14}\\
                Hemi        & 1{,}955   & N/A & \textbf{\boldmath$1.566\times10^{7}$} & $1.762\times10^{7}$ \\
                PR2392      & 2{,}392   & N/A & \textbf{\boldmath$2.922\times10^{10}$} & $2.945\times10^{10}$ \\
                RDS\_CNT    & 10{,}000  & N/A & \textbf{\boldmath$7.579\times10^{7}$} & $7.919\times10^{7}$ \\
                \midrule
                HTRU2       & 17{,}898  & N/A & $2.218\times10^{8}$ & \textbf{\boldmath$1.859\times10^{8}$} \\
                SPNET3D\_5  & 50{,}000  & N/A & \textbf{\boldmath$8.171\times10^{6}$} & $8.320\times10^{6}$ \\
                Skin\_8    & 80{,}000  & N/A & \textbf{\boldmath$7.579\times10^{8}$} & $8.775\times10^{8}$ \\
                \midrule
                Syn-12000   & 12{,}000  & N/A & \textbf{\boldmath$9.520\times10^{4}$} & \textbf{\boldmath$9.520\times10^{4}$} \\
                Syn-21000   & 21{,}000  & N/A & \textbf{\boldmath$1.818\times10^{5}$} & \textbf{\boldmath$1.818\times10^{5}$} \\
                Syn-42000   & 42{,}000  & N/A & \textbf{\boldmath$5.133\times10^{5}$} & \textbf{\boldmath$5.133\times10^{5}$} \\
                Syn-210000  & 210{,}000 & N/A & \textbf{\boldmath$2.165\times10^{6}$} & \textbf{\boldmath$2.165\times10^{6}$} \\
                \bottomrule
            \end{tabular}
        \end{small}
    \end{center}
\end{table*}

\newpage

\newpage

\end{document}